\newcommand{\F}{\mathcal{F}}
\newcommand{\Wre}{W_{\mathrm{re}}}
\newcommand{\Wim}{W_{\mathrm{im}}}
\newcommand{\x}{\textbf{x}}
\newcommand{\beps}{\bm{\epsilon}}
\newcommand{\boldf}{\bm{f}}
\newcommand{\boldw}{\bm{w}}
\newcommand{\boldG}{\bm{G}}
\newcommand{\boldA}{\Lambda}
\newcommand{\R}{\mathbb{R}}
\newcommand{\C}{\mathbb{C}}
\newcommand{\Cconstr}{\C_{\mathrm{constr}}}
\newcommand{\N}{\mathbb{N}}
\newcommand{\boldb}{\bm{b}}
\newcommand{\boldh}{{\varphi^{-1}}}
\renewcommand{\d}{\mathrm{d}}
\newcommand{\s}{\mathbf{s}}
\newcommand{\takeaway}[1]{\textbf{\textcolor{teal}{Take-away #1.}}}
\newcommand{\boldv}{\bm{v}}
\newcommand{\y}{\textbf{y}}
\newcommand{\trunc}{\varphi}
\newcommand{\scorematching}{\mathcal{L}_{\mathrm{SM}}}
\newcommand{\bmag}[1]{\textbf{\textcolor{blue!70}{#1}}}
\newcommand{\D}[1]{\mathcal{D}_{\mathrm{#1}}}
\renewcommand{\S}[1]{\mathcal{S}_{\mathrm{#1}}}
\newcommand{\tildeS}[1]{\tilde{\mathcal{S}}_{\mathrm{#1}}}
\newcommand{\tildeD}[1]{\tilde{\mathcal{D}}_{\mathrm{#1}}}
\newcommand{\nyquist}{\omega_{\mathrm{Nyq}}}
\newcommand{\indep}{\perp \!\!\! \perp}
\newcommand{\deloc}[1]{\Delta_\mathrm{#1}}
\newcommand{\z}{\mathbf{z}}
\theoremstyle{plain}
\newtheorem{theorem}{Theorem}[section]
\newtheorem{proposition}[theorem]{Proposition}
\newtheorem{lemma}[theorem]{Lemma}
\theoremstyle{definition}
\theoremstyle{remark}
\newtheorem{remark}[theorem]{Remark}
\definecolor{Gray}{gray}{0.95}
\newcolumntype{g}{>{\columncolor{Gray}}r}
\newcolumntype{e}{>{\columncolor{Gray}}l}
\title{Time Series Diffusion in the Frequency Domain
}
\author{
  Jonathan Crabbé\thanks{Denotes equal contribution and corresponding authors.}, \ \ Nicolas Huynh\footnotemark[1], \ \ Jan Stanczuk, \ \ Mihaela van der Schaar \\
  DAMTP \\
  University of Cambridge \\
  \texttt{\{jc2133, nvth2, js2164, mv472\}@cam.ac.uk} \\
}
\begin{document}
\maketitle

\begin{abstract}
Fourier analysis has been an instrumental tool in the development of signal processing. This leads us to wonder whether this framework could similarly benefit generative modelling.  In this paper, we explore this question through the scope of time series diffusion models. More specifically, we analyze whether representing time series in the frequency domain is a useful inductive bias for score-based diffusion models. By starting from the canonical SDE formulation of diffusion in the time domain, we show that a dual diffusion process occurs in the frequency domain with an important nuance: Brownian motions are replaced by what we call mirrored Brownian motions, characterized by mirror symmetries among their components. Building on this insight, we show how to adapt the denoising score matching approach to implement diffusion models in the frequency domain. This results in frequency diffusion models, which we compare to canonical time diffusion models. Our empirical evaluation on real-world datasets, covering various domains like healthcare and finance, shows that frequency diffusion models better capture the training distribution than time diffusion models. We explain this observation by showing that time series from these datasets tend to be more localized in the frequency domain than in the time domain, which makes them easier to model in the former case. All our observations point towards impactful synergies between Fourier analysis and diffusion models.  
\end{abstract}

\section{Introduction} \label{sec:introduction}




Deep generative modelling leverages the inductive bias of neural networks to learn complex, high-dimensional probability distributions from real-world datasets. Among other applications, generative models allow for generation of new synthetic samples consistent with the distribution of the training data, yet distinct from the actual data encountered during training. Recently this field has seen tremendous progress in various modalities including image \cite{karras2020stylegan2, dhariwal2021diffusion_beat_GANs}, audio \cite{kong2021diffwave, donahue2018adversarial_audio}, video \cite{rombach2022video} and text~\cite{dieleman2022continuous} generation, as well as addressing inverse problems such as in-painting \cite{lugmayr2022repaint} or super-resolution \cite{saharia2022sr3}. Moreover deep generative models have started showing significant potential in contributing to natural sciences, though protein design \cite{watson2023rfdiffusion}, drug development \cite{xu2022geodiff} and material synthesis \cite{zeni2023mattergen}. However, the application of these models to time series data has not seen the same level of advancement \cite{gatta2022nn_time_series}. Some notable examples of time series generative models include TimeGAN \cite{yoon2019timegan}, FourierFlow \cite{alaa2021fourierflows}, and RCGAN \cite{esteban2017rcgan}, yet this area remains less explored compared to other applications.

\textbf{Diffusion Models.} In recent years, diffusion models \cite{hyvarinen2005estimation, sohl2015diffusion, ho2020ddpm, song2020score} have emerged as one of the most promising research avenues in deep generative modelling, achieving state-of-the art results across many generative modelling tasks \cite{dhariwal2021diffusion_beat_GANs, saharia2022sr3}. Diffusion models have been applied to time series modelling, achieving promising results \cite{lin2023times_series_diffusion_survey}. However, there is substantial room for development and refinement in these early-stage applications. 

\textbf{Fourier analysis.} Fourier analysis is a remarkably powerful tool in signal processing, compression and machine learning \cite{korner2022fourier}. It has been shown to significantly improve state-of-the-art the performance of many deep learning based time series analysis techniques  \cite{yi2023deep_learning_ts_fourier_survey}, with some recent applications in dataset distillation~\cite{shin2023frequency}. In the context of deep generative models, this is exemplified in \cite{alaa2021fourierflows}, where the application of normalizing flows to Fourier representations yielded promising results. More recently, some work by \cite{phillips2022spectral} has been done on diffusion on functional spaces, which include Fourier representations of signals although the paper does not specifically focus on the Fourier basis.

\textbf{Motivation.} Despite Fourier analysis' widespread success, its application to diffusion models for time series remains largely unexplored. This paper seeks to fill this research gap, by examining whether spectral representations can improve diffusion models for time series modelling. Our focus is not on achieving state-of-the-art results, but rather investigating whether representing time series in the
frequency domain is a useful inductive bias for
diffusion models.

\begin{mdframed}[style=contribution]
\bmag{Our contributions.} 
\bmag{(1)~Formalizing frequency diffusion.}
In \cref{sec:theory}, we show theoretically how to translate SDE-based diffusion of time series to the frequency domain. We demonstrate that the denoising score matching recipe can be adapted by replacing standard Brownian motions by what we call mirrored Brownian motions, characterized by mirror symmetries in their components.
\bmag{(2)~Comparing time and frequency diffusion.}
In \cref{subsec:sample_quality},  we compare the ability of the time and frequency score models to generate samples that are faithful to the training sets by leveraging \emph{sliced Wasserstein distances}. Through an extensive analysis on 6 real-world datasets illustrating fields like healthcare, finance, engineering and climate modelling, we demonstrate that frequency score models consistently outperform the time score models.
\bmag{(3)~Understanding why and when frequency diffusion is preferable.} 
In \cref{subsec:sample_differences}, we demonstrate that the signals in all 6 datasets concentrate most of their power spectrum on the low frequencies. We hypothesize that this localization in the frequency domain explains the superior performances of frequency diffusion models.  In \cref{subsec:when_frequency}, we confirm this hypothesis by artificially delocalizing the spectral representation of real signals and showing that the gap between time and frequency diffusion closes. 
\end{mdframed}

\section{Background} \label{sec:background}
\textbf{Notations.} We consider multivariate time series of fixed size\footnote{Padding over time can be used in cases where the datasets contain time series of different lengths.} $\x \in \R^{N \times M}$, where $N \in \N$ is the number of time steps and $M \in \N$ is the number of features tracked over time. Often, we will denote by $d_X = N \cdot M$ the total dimension of the time series $\x$. We shall use \emph{Greek letters} for components of the time series. In this way, $\x_\tau \in \R^{M}$ denotes the feature vector at time $\tau \in [N]$ and $x_{\tau , \nu}$ denotes the value of feature $\nu \in [M]$ at time $\tau$. We denote by  $[K] := \{ 0, 1 , \dots , K-1 \}$ the integers between $0$ (included) and $K \in \N$ (excluded). To avoid any confusion between time series steps and diffusion steps, we shall use \emph{Latin letters} for the diffusion process. In this way, the diffusion process is described by a family of time series $\{ \x(t) \in \R^{d_X} \}_{t=0}^T$ indexed by a continuous diffusion variable $t \in [0, T]$. Thanks to these notations, we unambiguously interpret $\x_\tau(t) \in \R^{M}$ as the feature vector at time step $\tau \in [N]$ and at diffusion step $t \in [0,T]$. We shall detail below how this diffusion process is defined.

\subsection{Score-based generative modeling with SDEs}
In continuous-time diffusion modelling, one assumes access to samples drawn from an unknown density   $p_{\mathrm{data}}$. The objective of generative modelling is to obtain a tractable approximation of this distribution.

\textbf{Forward diffusion.} Score-based generative modeling with \emph{stochastic differential equation} (SDEs) \cite{song2020score} typically operates by first constructing a forward diffusion process.  In the case of time series, forward continuous diffusion is described by the following SDE, with $t \in [0, T]$: 
\begin{equation} \label{eq:forward_process}
    \d \x = \boldf(\x,t)\d t + \boldG(t) \d\boldw,
\end{equation}
where $\boldf: \mathbb{R}^{d_X} \times [0,T] \rightarrow \mathbb{R}^{d_X}$ is the drift, $\boldw$ is a standard Brownian motion in $\mathbb{R}^{d_X}$, and $\boldG: [0, T] \rightarrow \mathbb{R}^{N \times N}$ is the diffusion matrix. We denote $p_t$ the probability density of the solution $\x(t)$ of \cref{eq:forward_process} at time $t \in [0, T]$. With the slight abuse of notation from \cite{song2020score}, we shall abbreviate $p_t(\x(t))$ by $p_t(\x)$. Together with the SDE, we impose the initial condition $p_{0} = p_{\mathrm{data}}$, which corresponds to samples initially drawn from the data density  $p_{\mathrm{data}}$. In practice, we consider $\boldf$ and $\boldG$ such that $p_{\mathrm{data}}$ is transported to a final density $p_{T}$ close to an isotropic Gaussian.

\textbf{Reverse diffusion.} The reverse diffusion process performs the inverse transformation by transporting the isotropic Gaussian density $p_T$ to the data density $p_0 = p_{\mathrm{data}}$. Hence, applying reverse diffusion to samples drawn from the isotropic Gaussian permits to sample from the unknown density $p_{\mathrm{data}}$. It was shown by \cite{anderson1982} that this reverse diffusion satisfies the following SDE:
\begin{equation} \label{eq:backward_process}
    \d\x = \boldb(\x, t)\d t + \boldG(t)\d\hat{\boldw},
\end{equation}
where $\boldb(\x,t) =  \boldf(\x,t) - \boldG(t) \boldG(t)^{T} \nabla_{\x}\log p_{t}(\x)$, $\d t$ is a negative infinitesimal time step, and  $\hat{\boldw}$ is a Brownian time increment with time going backwards from $T$ to $0$.

\textbf{Denoising score matching.} In order to run the reverse diffusion process, one needs access to the score $\s(\x, t) := \nabla_\x \log p_t(\x)$. In practice, the ground-truth density $p_t$ is unknown. Denoising score matching circumvents this problem by estimating the ground-truth score with a function $\s_{\theta^*}$ whose parameters $\theta^*$ \emph{minimize} the following score matching objective computed from the data samples~\cite{hyvarinen2005estimation, song2019generative}:
\begin{align} \label{eq:score_matching}
   &\theta^* = \arg \min_{\theta \in \Theta}  \mathbb{E}_{t, \x(0), \x(t)} \left[ \scorematching(\s_\theta, \s_{t \vert 0}, \x, t) \right] \\
   &\scorematching(\s_\theta, \s_{t \vert 0}, \x, t) := \|\s_{\theta}(\x,t) - \s_{t \vert 0}(\x, t)
     \|^2 
\end{align}
where $\| \cdot \|$ denotes the Frobenius norm, $t \sim \mathcal{U}(0,T)$, $\x(0) \sim p_0(\x)$, $\x(t) \sim p_{t \mid 0}(\x(t) \vert \x(0))$ with $p_{t\vert 0}$ denoting the transition kernel from $0$ to $t$, and $\s_{t \vert 0}(\x, t) := \nabla_{\x(t)}\log p_{t \vert 0}(\x(t) \vert \x(0))$. With sufficient model capacity, the parameters $\theta^*$ provide an approximation $\s_{\theta^*}$ that is equal to the score $\s$ for almost all $\x$ and $t$ in the large data limit~\cite{vincent2011connection}. Equipped with an approximation of the score $\s_{\theta^*} \approx \s$, one can generate data by sampling according to the solution defined by the reverse diffusion process from \cref{eq:backward_process}.

\subsection{Discrete Fourier Transform}

\textbf{DFT.} By considering a time series $\x= (\x_{0}, \dots , \x_{N-1}) \in \R^{d_X}$, the \emph{Discrete Fourier Transform} (DFT), denoted as $ \tilde{\x} = \F [\x]$, is defined as 
\begin{equation} 
\tilde{\x}_{\kappa} := \frac{1}{\sqrt{N}}\sum_{\tau = 0}^{N-1} \x_{\tau} \exp \left(-\frac{\kappa  2\pi i}{N}  \tau \right)
\end{equation} 
for all $\kappa \in [N]$. In the signal processing literature, each $\kappa$ 
 corresponds to a harmonic of frequency $\omega_{\kappa} := \frac{\kappa 2 \pi}{ N}$. For this reason, the DFT $\tilde{\x}$ is said to represent the time series $\x$ in the \emph{frequency domain}, as opposed to the \emph{time domain}. We also note that the DFT is complex-valued ($\tilde{\x} \in \C^{d_X}$). 

 \textbf{Matrix representation.} We note that the DFT operator $\F$ is \emph{linear} with respect to the time components $(\x_{0}, \dots , \x_{N-1})$. It can therefore be expressed through a left matrix multiplication $\tilde{\x} = \F[\x] = U \x$, where $U \in \mathbb{C}^{N\times N}$ is defined as $[U]_{\kappa \tau} := N^{- 1 / 2} \exp (- i \omega_{\kappa} \tau )$. It can easily be checked (see \cref{subapp:theory_dft}) that the matrix $U$ is unitary: $U^* U = U U^* = I_N$, where $U^*$ is the conjugate transpose of $U$ and $I_N$ is the $N \times N$ identity matrix. This implies  that the DFT operator is invertible and that the original time series can be reconstructed from its representation in the frequency domain: $\x = \F^{-1}[\tilde{\x}] := U^* \tilde{\x}$.

 \textbf{DFT of a real-valued sequence.} 
While the DFT $\tilde{\x}$ is defined in $\mathbb{C}^{d_X}$, some of its components are made redundant by the fact that $\x$ is a real-valued time-series. One can easily check (see \cref{subapp:theory_dft}) that this constraint imposes the following \emph{mirror symmetry} on the DFT for all $\kappa \in [N]$: 
\begin{equation} \label{eq:real_constraint}
    \tilde{\x}_{\kappa} = \tilde{\x}_{N-\kappa}^*,
\end{equation}
where $z^*$ denotes the complex conjugate of $z \in \C$ and we define $\tilde{\x}_{N} := \tilde{\x}_{0}$ for consistency. Through this symmetry, we observe that the components $ \tilde{\x}_\kappa$ with $\kappa \leq  \lfloor N / 2 \rfloor$ uniquely define the DFT of a real-valued time series. For this reason, the frequencies beyond the Nyquist frequency $\nyquist := \omega_{\lfloor N/2 \rfloor}$ are redundant with respect to the lower frequencies. In the frequency domain, one then needs only to diffuse $N$ real numbers extracted from the DFT and the rest of $\tilde{\x}$ can be deduced from \cref{eq:real_constraint}. \\

\textbf{Signal Energy.} An important quantity related to a time series $\x$ is its \emph{total energy}, which simply corresponds to the squared Frobenius norm $\| \x \|^2 := \sum_{\tau=0}^{N-1} \sum_{\nu=1}^M |x_{\tau, \nu}|^2$, where $|\cdot|$ denotes the modulus of a complex number. Through \emph{Parseval's theorem}, this energy can be evaluated by computing the same norm for the DFT $\tilde{\x}$ of $\x$: $\| \x \|^2 = \| \tilde{\x} \|^2$. We note that the total energy is obtained by summing over all time steps or frequencies. To characterize how the energy is distributed over the time steps $\tau \in [N]$, we use the \emph{energy density} defined as the squared Euclidean norm $\| \x_{\tau} \|^2_2 := \sum_{\nu=1}^M |x_{\tau, \nu}|^2$. Similarly, the \emph{spectral energy density} defined as $\| \tilde{\x}_\kappa \|_2^2$ describes how the signal energy is distributed across the frequencies $\kappa \in [N]$.

\textbf{Probability density in the complex space.} Adapting the diffusion formalism to the frequency domain requires to define a probability density for the complex-valued random variable $\tilde{\x} \in \C^{d_X}$. By following \cite{schreier2010}, this density is written in terms of the real and imaginary parts of the signal $\tilde{p}(\tilde{\x}) := \tilde{p}(\Re [\tilde{\x}], \Im[ \tilde{\x} ] )$.  Similarly, the score function follows a similar decomposition in terms of the signal real and imaginary parts $\tilde{\s}(\tilde{\x}) := \nabla_{\Re [\tilde{\x}]} \log \tilde{p}(\tilde{\x}) + i \cdot \nabla_{\Im [\tilde{\x}]} \log \tilde{p}(\tilde{\x})$. We note that the gradient involved in the definition of the scores is non-trivial when the constraint in \cref{eq:real_constraint} is enforced. In \cref{subapp:theory_submanifold}, we establish a formal definition in this setting by interpreting the complex signals fulfilling this constraint as a submanifold in $\C^{d_X}$. This constraint implies that the score components follow an analogous mirror symmetry: $\tilde{\s}_{\kappa} = \tilde{\s}_{N - \kappa}^*$ for all $\kappa \in [N]$. In the following, we shall implicitly rely on this definition.

\section{Diffusing in the frequency domain} \label{sec:theory}
In the previous section, we have described how the typical diffusion formalism applies to time-series. We have also described how the DFT $\tilde{\x} = \F[\x]$ offers a full description of the time series $\x$ in the frequency domain. The first step is to define how time-based diffusion translates in the frequency domain. Note that this is non-trivial as the DFT are complex-valued $\tilde{\x} \in \C^{d_X}$ signals. To solve this, we shall assume that the stochastic process in the time domain $\{\x(t) \}_{t=0}^{T}$, written compactly as $\x$, follows the diffusion process described in \cref{eq:forward_process}. By leveraging the matrix formulation of the DFT $\tilde{\x} = U \x$, we will now derive diffusion SDEs in the frequency domain.

\subsection{Diffusion SDEs}

In order to derive the diffusion SDEs in the frequency domain, we shall simply apply the DFT operator to the forward diffusion SDE in the time domain from \cref{eq:forward_process}. We note that this equation contains a standard Brownian motion $\boldw$. In the below lemma, we describe the DFT of $\boldw$ and show that it contains two copies of a non-standard Brownian motion related by the constraint from \cref{eq:real_constraint}. We refer to this as a \emph{mirrored Brownian motion}. 

\begin{restatable}{lemma}{dftbrownianlemma}(DFT of standard Brownian motion).\label{lemma:brownian_transform} Let $\boldw$ be a standard Brownian motion on $\mathbb{R}^{d_X}$ with $d_X = N\cdot M$, where $N \in \N^+$ is the number of time series steps and $M \in \N^+$ is the number of features tracked over time. Then $\boldv = U\boldw$ is a continuous stochastic process endowed with:\\
    \textbf{(1) Mirror Symmetry.} For all $\kappa \in [N]$, ${\boldv}_{\kappa} = {\boldv}_{N-\kappa}^*$.\\
    \textbf{(2) Real Brownian Motion.} ${\boldv}_0$ is a (real) standard Brownian motion on $\R^M$. \\
    \textbf{(3) Complex Brownian Motions.} For all $\kappa$ with $1 \leq \kappa \leq \lfloor N/2 \rfloor$, we can write ${\boldv}_{\kappa} = (\tilde{\boldw}^1_\kappa + i \tilde{\boldw}^2_\kappa) / \sqrt{2}$  where $\tilde{\boldw}^1_\kappa$ and $\tilde{\boldw}^2_\kappa$ are independent standard Brownian motions on $\R^{M}$, except when $N$ is even and $\kappa = N/2$, where   $\boldv_{N/2}$ is a real standard Brownian motion on $\mathbb{R}^{M}$. \\
    \textbf{(4) Independence.} The stochastic processes $\{\tilde{\boldv}_\kappa\}_{\kappa=0}^{\lfloor N/2 \rfloor}$ are mutually independent. \\
    We call any stochastic process satisfying the above constraints a \emph{mirrored Brownian motion} on $\C^{d_X}$. 
\end{restatable}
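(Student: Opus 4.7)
The plan is to exploit two structural facts: the process $\boldv = U\boldw$ is a continuous linear image of a Gaussian process, so it is itself a continuous, zero-mean, complex-valued Gaussian process; and the DFT matrix $U$ is unitary with a symmetric row structure $U_{N-\kappa,\tau} = U_{\kappa,\tau}^{*}$. Consequently, everything reduces to computing second-order moments and reading off the claimed properties from the orthogonality relations of the complex exponentials.

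First I would establish the mirror symmetry (1) by a direct algebraic computation: since $\boldw$ is real, $\boldv_{N-\kappa} = \sum_\tau U_{N-\kappa,\tau}\boldw_\tau = \sum_\tau U_{\kappa,\tau}^{*}\boldw_\tau = \boldv_{\kappa}^{*}$. This immediately shows that $\boldv_{0}$ (and $\boldv_{N/2}$ when $N$ is even) is real-valued, because $U_{0,\tau}=N^{-1/2}$ and $U_{N/2,\tau}=N^{-1/2}(-1)^{\tau}$ are real. For (2), $\boldv_{0}(t) = N^{-1/2}\sum_{\tau=0}^{N-1}\boldw_{\tau}(t)$ is a sum of $N$ independent standard Brownian motions on $\R^{M}$ scaled by $N^{-1/2}$, so its covariance is $t\,I_{M}$ and continuity is inherited from $\boldw$; the Gaussian increment structure gives a standard Brownian motion.

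For (3), I would split the real and imaginary parts: writing $\boldv_{\kappa} = \Re\boldv_{\kappa} + i\,\Im\boldv_{\kappa}$ with $\Re\boldv_{\kappa} = N^{-1/2}\sum_{\tau}\cos(\omega_{\kappa}\tau)\boldw_{\tau}$ and $\Im\boldv_{\kappa} = -N^{-1/2}\sum_{\tau}\sin(\omega_{\kappa}\tau)\boldw_{\tau}$, and then invoking the elementary identities
\begin{equation*}
\sum_{\tau=0}^{N-1}\cos^{2}(\omega_{\kappa}\tau) = \sum_{\tau=0}^{N-1}\sin^{2}(\omega_{\kappa}\tau) = \tfrac{N}{2},\qquad \sum_{\tau=0}^{N-1}\cos(\omega_{\kappa}\tau)\sin(\omega_{\kappa}\tau)=0
\end{equation*}
valid for $1\le \kappa \le \lfloor N/2\rfloor$ except $\kappa=N/2$. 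This gives $\mathrm{Var}(\Re\boldv_{\kappa}(t)) = \mathrm{Var}(\Im\boldv_{\kappa}(t)) = \tfrac{t}{2}I_{M}$ with vanishing cross-covariance, so $\tilde\boldw^{1}_{\kappa}:=\sqrt{2}\,\Re\boldv_{\kappa}$ and $\tilde\boldw^{2}_{\kappa}:=\sqrt{2}\,\Im\boldv_{\kappa}$ are independent standard Brownian motions. The degenerate case $\kappa=N/2$ is handled separately because $\sin(\pi\tau)=0$ and $\cos^{2}(\pi\tau)=1$, yielding a real Brownian motion with covariance $t\,I_{M}$ directly.

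Finally, for (4) I would rely on the Gaussian structure: it suffices to show that the cross-covariances between $\boldv_{\kappa_{1}}$ and $\boldv_{\kappa_{2}}$ vanish for $0\le \kappa_{1}<\kappa_{2}\le \lfloor N/2\rfloor$. This follows from $\sum_{\tau} \exp(i(\omega_{\kappa_{1}}-\omega_{\kappa_{2}})\tau)=0$ and $\sum_{\tau} \exp(i(\omega_{\kappa_{1}}+\omega_{\kappa_{2}})\tau)=0$ in the range considered, which translate into the vanishing of all pairwise real/imaginary cross-covariances. The main obstacle I anticipate is careful bookkeeping in (3) and (4): the trigonometric orthogonality argument has to be stated uniformly but also treat the boundary case $\kappa=N/2$ correctly, and one must be explicit about what "independence of complex Brownian motions" means—namely that the pairs $(\Re\boldv_{\kappa},\Im\boldv_{\kappa})$ are jointly Gaussian with block-diagonal covariance across $\kappa$, which, combined with continuity inherited from $\boldw$, yields mutual independence of the processes.
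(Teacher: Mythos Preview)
Your proposal is correct and follows essentially the same approach as the paper: both proofs use the linear Gaussian structure of $U\boldw$ and reduce everything to verifying second-order moments via the orthogonality relations of the discrete trigonometric system, then conclude independence from vanishing cross-covariances. The paper packages these computations through the matrix products $U_{re}^{2}$, $U_{im}^{2}$, $U_{re}U_{im}$ rather than writing the sums directly as you do, but the underlying argument and case distinctions (including the $\kappa=N/2$ boundary case) are identical.
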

\begin{proof}
    The proof is given in \cref{subapp:theory_sdes}. 
\end{proof}
\begin{remark}
    Note that $\boldv$ is not strictly speaking a Brownian motion, since it contains duplicate components due to the mirror symmetry. However, our theoretical analysis in \cref{app:math_details} demonstrates that we can treat it as such by restricting to a subset of non-redundant components. 
\end{remark}

We now leverage \cref{lemma:brownian_transform} to show that $\tilde{\x}$ can be described by diffusion SDEs in the frequency domain which involve mirrored Brownian motions.

\begin{restatable}{proposition}{dftsde}(Diffusion process in frequency domain). Let us assume that $\x$ is a diffusion process that is a solution of \cref{eq:forward_process}, with $\boldG(t) = g(t) \ I_{N}$. Then \label{prop:forward_diffusion_fourier} $\tilde{\x} = \F[\x]$ is a solution to the forward diffusion process defined by:
\begin{equation} \label{eq:fourier_forward}
    \d \tilde{\x} = \tilde{\boldf}(\tilde{\x},t)\d t + g(t)\d\tilde{\boldv},
\end{equation}
where $\tilde{\boldf}(\tilde{\x},t) = U\boldf (U^*\tilde{\x},t)$ and $\tilde{\boldv}$ is a mirrored Brownian motion on $\mathbb{C}^{d_X}$.
The associated reverse diffusion process is defined by:
    \begin{equation} \label{eq:fourier_backward}
    \d \tilde{\x} = \tilde{\boldb}(\tilde{\x}, t)\d t + g(t)\d\breve{\boldv}
\end{equation}
where $ \tilde{\boldb}(\tilde{\x}, t) = \tilde{\boldf}(\tilde{\x}, t) - g^2(t) \Lambda^2 \tilde{\s}(\tilde{\x}, t)$, $\Lambda \in \R^{N \times N}$ is a diagonal matrix defined in \cref{subapp:theory_sdes}, $\d t$ is a negative infinitesimal time step,  and $\breve{\boldv}$ is a mirrored Brownian motion on $\mathbb{C}^{d_X}$ with time going from $T$ to $0$.
\end{restatable}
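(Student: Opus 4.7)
The plan is to derive both SDEs by pushing the time-domain process through the (constant, deterministic, unitary) DFT operator $U$ and then invoking standard SDE calculus, with \cref{lemma:brownian_transform} playing the central role for the noise term.

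For the forward SDE, I would start from \cref{eq:forward_process} with $\boldG(t) = g(t)\,I_N$ and apply $U$ (acting on the time index, i.e.\ $U \otimes I_M$ when viewed on the full $d_X$-dimensional space) to both sides. Since $U$ is constant in $t$ and deterministic, it commutes with the Itô differential, giving
\[
\d \tilde{\x} \;=\; U\,\boldf(U^*\tilde{\x}, t)\,\d t \;+\; g(t)\,U\,\d \boldw \;=\; \tilde{\boldf}(\tilde{\x}, t)\,\d t \;+\; g(t)\,\d \tilde{\boldv},
\]
where I use \cref{lemma:brownian_transform} to identify $\tilde{\boldv} := U\boldw$ as a mirrored Brownian motion on $\C^{d_X}$. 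This is exactly \cref{eq:fourier_forward}.

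For the reverse SDE, I would invoke Anderson's time-reversal theorem, but adapted to the frequency domain. The subtlety is that the mirror constraint \cref{eq:real_constraint} collapses the $2 d_X$ naive real degrees of freedom of $\tilde{\x} \in \C^{d_X}$ to only $d_X$ independent ones, so Anderson cannot be applied blindly on $\C^{d_X}$. Instead, I would work on the non-redundant components $\{\tilde{\x}_\kappa\}_{\kappa \leq \lfloor N/2 \rfloor}$, split into real and imaginary parts, yielding a real vector in $\R^{d_X}$. In this reduced representation, items (2)--(4) of \cref{lemma:brownian_transform} tell us that the mirrored Brownian motion has a diagonal covariance whose entries reflect the $1/\sqrt{2}$ normalization of the generic complex modes versus the unit variance of the DC mode (and the Nyquist mode when $N$ is even). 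This is precisely the diagonal matrix $\Lambda^2$ defined in \cref{subapp:theory_sdes}. Applying Anderson's theorem on this real representation then produces the drift correction $-g^2(t)\,\Lambda^2\,\tilde{\s}(\tilde{\x}, t)$, where $\tilde{\s}$ is the Wirtinger-style complex score introduced in the preliminaries.

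It remains to re-lift the reverse SDE from the non-redundant coordinates back to all of $\C^{d_X}$. This is essentially bookkeeping: the redundant components must evolve as the mirror conjugates of their counterparts, and a driving mirrored Brownian motion $\breve{\boldv}$ can be reconstructed accordingly. The main obstacle is making the reduction to a real submanifold fully rigorous while remaining consistent with the complex score definition: concretely, I expect to need (i) a precise identification between the constrained subset of $\C^{d_X}$ and $\R^{d_X}$ — deferred to \cref{subapp:theory_submanifold} — and (ii) a verification that the score's own mirror symmetry $\tilde{\s}_\kappa = \tilde{\s}_{N-\kappa}^*$ combines with $\Lambda^2$ so that the reverse drift $\tilde{\boldb}$ preserves the mirror constraint throughout the backward flow.
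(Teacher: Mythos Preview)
Your proposal is correct and follows essentially the same approach as the paper: push the forward SDE through $U$ and invoke \cref{lemma:brownian_transform}, then for the reverse SDE pass to the non-redundant real coordinates (the paper does this via the truncation chart $\trunc$ of \cref{subapp:theory_submanifold}), apply Anderson there so that the diffusion matrix $g(t)\Lambda$ yields the $g^2(t)\Lambda^2$ drift correction, and lift back with $\trunc^{-1}$ to recover \cref{eq:fourier_backward}. Your point~(ii) about the mirror symmetry of $\tilde{\s}$ ensuring that $\tilde{\boldb}$ preserves the constraint is exactly what the paper handles implicitly by the commutation $\trunc^{-1}(\Lambda^2 \cdot) = \Lambda^2 \trunc^{-1}(\cdot)$ in its Step~3.
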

\begin{proof}
    The proof is given in \cref{subapp:theory_sdes}.
\end{proof}

\cref{prop:forward_diffusion_fourier} gives us a recipe to implement diffusion in the frequency domain. It guarantees that the formalism introduced by \cite{song2020score} extends to this setting with one important difference: the Brownian motion must be replaced by a mirrored Brownian motion. Ignoring this prescription by taking $\tilde{\boldv}$ to be a Brownian motion on $\C^{d_X}$ could lead to unintended consequences, such as generating complex-valued time series $\x \in \C^{d_X} \setminus \R^{d_X}$.

\subsection{Denoising score matching}
The reverse diffusion process given in \cref{eq:fourier_backward} provides an explicit way to samples time-series in the frequency domain provided we can compute $\tilde{\boldb}(\tilde{\x},t)$, which involves the unknown score $\tilde{\s}$. Like in the time domain, and motivated by \cref{eq:fourier_backward}, we build an approximation of the score with a function $\tilde{\s}_{\tilde{\theta}^*}$, whose parameters $\tilde{\theta}^*$ minimize the score matching objective: 

\begin{equation} \label{eq:score_matching_fourier}
    \tilde{\theta}^* = \arg \min_{\tilde{\theta} \in \Theta}  \mathbb{E}_{t, \tilde{\x}(0), \tilde{\x}(t)} \left[ \scorematching\left(\tilde{\s}_{\tilde{\theta}}, \boldA^{2}\tilde{\s}_{t \vert 0}, \tilde{\x}, t\right) \right]
\end{equation}

with $t \sim \mathcal{U}(0,T)$, $\tilde{\x}(0) \sim \tilde{p}_0(\tilde{\x})$, $\tilde{\x}(t) \sim \tilde{p}_{t \mid 0}(\tilde{\x}(t) \vert \tilde{\x}(0))$ and $\boldA$ is the diagonal matrix defined in \cref{prop:forward_diffusion_fourier}. In practice, this objective is evaluated by first obtaining frequency representations of time-series, and then sampling from $\tilde{p}_{t \mid 0}$ using \cref{eq:fourier_forward}. Having trained $\tilde{\s}_{\tilde{\theta}^*}$, the backward process and $\tilde{\s}_{\tilde{\theta}^*}$ permit to draw samples from $\tilde{p}_0$. It then suffices to apply the inverse DFT $\F^{-1}$ to map the resulting complex-valued signals back into the time domain. 

One important question remains at this stage. How does training a score in the frequency domain allow to generate DFT of time series sampled from $p_\mathrm{data}$? In other words, how does minimizing the score matching in \cref{eq:score_matching_fourier} imply that $\tilde{p}_0 \approx \tilde{p}_{\mathrm{data}}$? To answer this question, a key observation is that we can associate an auxiliary score $\s'_{\tilde{\theta}}$ in the time domain to the score $\tilde{\s}_{\tilde{\theta}}$ by applying an inverse DFT $\F^{-1}$. Below, we show that minimizing the score matching loss from \cref{eq:score_matching_fourier} for the score $\tilde{\s}_{\tilde{\theta}}$ is equivalent to minimizing the score matching loss from \cref{eq:score_matching} for the auxiliary score $\s'_{\tilde{\theta}}$. This important observation connects the reverse diffusion process in the frequency domain described by \cref{eq:fourier_backward} with a reverse diffusion process in the time domain following \cref{eq:backward_process}. 

\begin{restatable}{proposition}{dftscore}(Score matching equivalence).\label{prop:parseval}
Consider a score $\tilde{\s}_{\tilde{\theta}} : \C^{d_X} \times [0, T] \rightarrow \C^{d_X}$ defined in the frequency domain and satisfying the mirror symmetry $[\tilde{\s}_{\tilde{\theta}}]_{\kappa} = [\tilde{\s}_{\tilde{\theta}}^*]_{N- \kappa}$ for all $\kappa \in [N]$. Let us define an auxiliary score $\s'_{\tilde{\theta}}: \R^{d_X} \times [0, T] \rightarrow \R^{d_X}$ as $(\x, t) \mapsto \s'_{\tilde{\theta}}(\x, t) = U^* \tilde{\s}_{\tilde{\theta}}(U \x, t) $  in the time domain.
The score matching loss in the frequency domain is equivalent to the score matching loss for the auxiliary score in the time domain:
\begin{align} \label{eq:parseval}
  \scorematching\left(\tilde{\s}_{\tilde{\theta}}, \Lambda^{2} \tilde{\s}_{t \vert 0}, \tilde{\x}, t\right) = \scorematching\left(\s'_{\tilde{\theta}}, \s_{t \vert 0}, \x, t\right)
\end{align}
where $\tilde{\s}_{t \vert 0}(\tilde{\x},t) = \nabla_{\tilde{\x}(t)}\log \tilde{p}_{t \vert 0}(\tilde{\x}(t) \vert \tilde{\x}(0))$ ,  $\s_{t \vert 0}(\x, t) = \nabla_{\x(t)}\log p_{t \vert 0}(\x(t) \vert \x(0))$, and $\boldA$ is the diagonal matrix in \cref{prop:forward_diffusion_fourier}.
\end{restatable}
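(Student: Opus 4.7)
The plan is to reduce the equivalence to a single algebraic identity relating the two conditional scores and then invoke Parseval's theorem. I would start by substituting the definition $\s'_{\tilde{\theta}}(\x, t) = U^* \tilde{\s}_{\tilde{\theta}}(U\x, t)$ into the right-hand side of \cref{eq:parseval}, insert $I = U^* U$, and pull the unitary factor inside the Frobenius norm. Since $U$ preserves the (complex) Frobenius norm, this gives
\begin{equation*}
\| \s'_{\tilde{\theta}}(\x, t) - \s_{t \mid 0}(\x, t) \|^2 = \| U U^* \tilde{\s}_{\tilde{\theta}}(U\x, t) - U \s_{t \mid 0}(\x, t) \|^2 = \| \tilde{\s}_{\tilde{\theta}}(\tilde{\x}, t) - U \s_{t \mid 0}(\x, t) \|^2 .
\end{equation*}
Matching the final expression against the left-hand side $\|\tilde{\s}_{\tilde{\theta}}(\tilde{\x}, t) - \Lambda^{2} \tilde{\s}_{t \mid 0}(\tilde{\x}, t)\|^2$ reduces the claim to the single covariance relation $U \s_{t \mid 0}(\x, t) = \Lambda^{2} \tilde{\s}_{t \mid 0}(\tilde{\x}, t)$.

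The central technical step is thus to prove this covariance relation between the time- and frequency-domain conditional scores. I would derive it from the change of variables $\tilde{\x} = U \x$ applied to $\log \tilde{p}_{t \mid 0}$, using the submanifold definition of the complex gradient from \cref{subapp:theory_submanifold} to handle the mirror symmetry constraint \cref{eq:real_constraint}. The key ingredient is the transition density induced by the mirrored Brownian motion of \cref{lemma:brownian_transform}: the real components ($\kappa = 0$, and $\kappa = N/2$ when $N$ is even) accumulate unit variance per unit time, while the complex components for $1 \leq \kappa < N/2$ accumulate variance $1/2$ in each of their real and imaginary parts. Differentiating $\log \tilde{p}_{t \mid 0}$ on the independent real coordinates therefore produces component-wise weights that are precisely encoded by the diagonal entries of $\Lambda^{2}$; applying the chain rule through $U$ then matches these weights against the Euclidean gradient $\s_{t \mid 0}$ in the time domain. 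In fact, this is exactly the identity that underlies the reverse-time drift $\tilde{\boldb} = \tilde{\boldf} - g^2 \Lambda^{2} \tilde{\s}$ obtained in \cref{prop:forward_diffusion_fourier}: translating $\boldG \boldG^{T} \nabla_{\x} \log p_t = g^2 \s$ through the DFT necessarily produces the factor $\Lambda^{2}$, and the conditional kernel $p_{t \mid 0}$ is a special case.

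With this identity in hand, the Parseval reduction from the first paragraph immediately yields the pointwise equality $\| \tilde{\s}_{\tilde{\theta}}(\tilde{\x}, t) - \Lambda^{2} \tilde{\s}_{t \mid 0}(\tilde{\x}, t) \|^2 = \| \s'_{\tilde{\theta}}(\x, t) - \s_{t \mid 0}(\x, t) \|^2$, which is the proposition. The main obstacle is the careful derivation of $U \s_{t \mid 0} = \Lambda^{2} \tilde{\s}_{t \mid 0}$ on the submanifold of mirror-symmetric vectors in $\C^{d_X}$: a naive complex gradient on the ambient space would ignore the constraint from \cref{eq:real_constraint}, and correctly tracking the multiplicities introduced by the conjugate-symmetric pairs $\{\kappa, N - \kappa\}$ is precisely the bookkeeping for which the framework of \cref{subapp:theory_submanifold} is designed; once that is set up cleanly, everything else is an application of unitarity.
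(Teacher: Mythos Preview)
Your proposal is correct and follows essentially the same route as the paper: apply Parseval (unitarity of $U$) to reduce the claim to the single identity $U\,\s_{t\mid 0}(\x,t) = \Lambda^{2}\,\tilde{\s}_{t\mid 0}(\tilde{\x},t)$, then establish that identity via the change of variables and the submanifold chain rule from \cref{subapp:theory_submanifold}. The paper's execution of the second step is slightly more algebraic and density-agnostic than your sketch: it introduces the real matrix $Q$ with $\trunc[U\x] = Q\x$, proves the purely linear-algebraic fact $QQ^{T} = \Lambda^{2}$ (from the computations of $U_{\mathrm{re}}^{2}$ and $U_{\mathrm{im}}^{2}$ in \cref{lemma:brownian_transform}), and then obtains the score relation directly from the chain rule $\nabla_{\x}\log p_{t\mid 0} = Q^{T}\nabla_{\trunc[\tilde{\x}]}\log \tilde{p}_{t\mid 0}$ together with $Q^{T}\y = U^{*}\trunc^{-1}[\Lambda^{2}\y]$; in particular it never needs the explicit Gaussian form of the transition kernel, whereas your ``variance $1$ vs.\ $1/2$'' heuristic leans on that form. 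This is a difference of packaging rather than substance---the same covariance computation underlies both---but the paper's version makes it transparent that the identity holds for the marginal score $\tilde{\s}$ as well as the conditional one.
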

\begin{proof}
    The proof is given in \cref{subapp:theory_score}. 
\end{proof}

\cref{prop:forward_diffusion_fourier,prop:parseval} provide an explicit way to translate diffusion in the time domain to diffusion in the frequency domain. We note that attempting to solve \cref{eq:score_matching,eq:score_matching_fourier} in the finite-sample regime yields a local minimum solution in practice. Hence, there is no guarantee that training a score model in the frequency domain will converge to an auxiliary score $\s'_{\tilde{\theta}^*} = \s_{\theta^*}$ identical to the one obtained by training the score model in the time domain. In particular, having a score function $\tilde{\s}_{\theta}$ defined in the frequency domain is an important inductive bias, which is likely to alter the training dynamic. Through our experiments in the next section, we study the effect of this inductive bias on the resulting diffusion processes.

\begin{mdframed}[style=takeaway]
\takeaway{1} Diffusion in the frequency domain can be implemented by replacing the standard Brownian motions with mirrored Brownian motions in the diffusion SDEs. The associated score can be optimized by minimizing a denoising score matching loss.
\end{mdframed}

\section{Comparing time and frequency diffusion} \label{sec:empirical}
\begin{table*}[ht] 
\centering 
\caption{Various datasets used in our experiments and some of their properties.} \label{tab:dataset_description}
\begin{tabular}{|lllrrr|}
\hline
\textit{Dataset} & \textit{Reference}  & \textit{Field} &\textit{\# Samples} & \textit{\# Steps $N$} & \textit{\# Features $M$} \\
\hline
ECG & \cite{kachuee2018ecg} & \multirow[c]{2}{*}{Healthcare} & 87,553 & 187 & 1 \\
MIMIC-III & \cite{johnson2016mimic} & & 19,155 & 24 & 40  \\  
\rowcolor{gray!15}
NASDAQ-2019 & \cite{onyshchak2020} & Finance   & 4,827 & 252 & 5 \\ 
NASA-Charge & \multirow[c]{2}{*}{\cite{saha2007}} & \multirow[c]{2}{*}{Engineering} & 2,396 & 251 & 4 \\
NASA-Discharge & & & 1,755 & 134 & 5 \\ 
\rowcolor{gray!15}
US-Droughts & \cite{minixhofer2021} & Climate & 2,797 & 365 & 13 \\ 
\hline
\end{tabular}
\end{table*}

\begin{table*}[ht]
\centering
\caption{Sliced Wasserstein distances ($\downarrow$) evaluated in the time domain ($SW(\D{train}, \S{time})$, $SW(\D{train}, \S{freq})$) and in the frequency domain ($SW(\tildeD{train}, \tildeS{time})$, $SW(\tildeD{train}, \tildeS{freq})$) on the various datasets. For each distance, we report its mean $\pm$ 2 standard errors.}
\label{tab:sliced_wasserstein}
\begin{tabular}{|l|e|gg|}
\hline
\rowcolor{white}
 \textit{Dataset} & \textit{Metric Domain} & \multicolumn{2}{c|}{\textit{Diffusion Domain}} \\
 \rowcolor{white}
 &  & Frequency & Time \\ \hline
 \rowcolor{white}
\multirow[t]{2}{*}{ECG} & Frequency & $\mathbf{0.012 \ \pm \ 0.000}$ & $0.020 \ \pm \ 0.000$ \\
 & Time & $\mathbf{0.015 \ \pm \ 0.000}$ & $0.021 \ \pm \ 0.000$ \\
 \rowcolor{white}
\multirow[t]{2}{*}{MIMIC-III} & Frequency & $\mathbf{0.144 \ \pm \ 0.004}$ & $0.206 \ \pm \ 0.006$ \\
 & Time & $\mathbf{0.152 \ \pm \ 0.004}$ & $0.211 \ \pm \ 0.006$ \\
 \rowcolor{white}
 \multirow[t]{2}{*}{NASDAQ-2019} & Frequency & $\mathbf{45.812 \ \pm \ 2.096}$ & $64.056 \ \pm \ 3.040$ \\
 & Time & $\mathbf{43.602 \ \pm \ 2.044}$ & $60.512 \ \pm \ 2.960$ \\
 \rowcolor{white}
\multirow[t]{2}{*}{NASA-Charge} & Frequency & $\mathbf{0.211 \ \pm \ 0.008}$ & $0.27 \ \pm \ 0.006$ \\
 & Time & $\mathbf{0.229 \ \pm \ 0.008}$ & $0.316 \ \pm \ 0.008$ \\
 \rowcolor{white}
\multirow[t]{2}{*}{NASA-Discharge} & Frequency & $\mathbf{1.999 \ \pm \ 0.084}$ & $2.974 \ \pm \ 0.134$ \\
 & Time & $\mathbf{2.028 \ \pm \ 0.082}$ & $2.942 \ \pm \ 0.134$ \\
 \rowcolor{white}
\multirow[t]{2}{*}{US-Droughts} & Frequency & $\mathbf{0.633 \ \pm \ 0.018}$ & $2.849 \ \pm \ 0.090$ \\
 & Time & $\mathbf{0.738 \ \pm \ 0.020}$ & $2.913 \ \pm \ 0.092$ \\
\hline
\end{tabular}
\end{table*}

\begin{figure*}[ht]
\centering
    \begin{subfigure}{0.4\textwidth}
        \includegraphics[width=\textwidth]{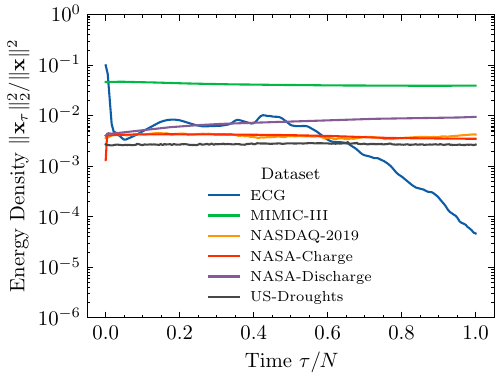}
        \caption{Time Domain.}
        \label{subfig:energy_density_datasets}
    \end{subfigure}
    \hfill
    \begin{subfigure}{0.4\textwidth}
        \includegraphics[width=\textwidth]{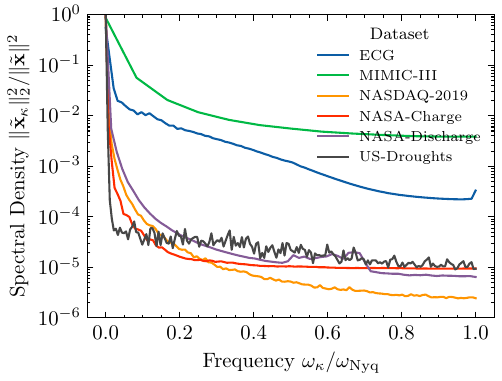}
        \caption{Frequency Domain.}
        \label{subfig:spectral_density_datasets}
    \end{subfigure}
    \caption{Localization of time series in the time and frequency domains. Time series are more localized in the frequency domain.}
    \label{fig:energy_spectrum_datasets}
\end{figure*}

\begin{figure}[ht]
    \centering
\includegraphics[width=.6\linewidth]{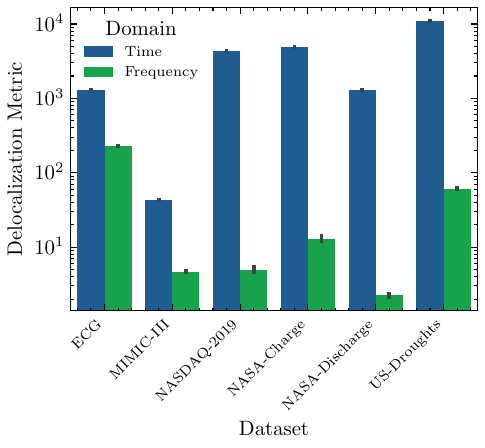}
\caption{By evaluating our delocalization metrics in the time domain ({$\color{NavyBlue} \deloc{time}$}) and the frequency domain ($\color{ForestGreen} \deloc{freq}$), we quantitatively confirm that all the datasets are significantly more localized in the frequency domain. All the metrics are averaged over $\D{train}$, their mean is reported with a $95\%$ confidence interval.}
\label{fig:delocalization_datasets}
\end{figure}

\begin{figure*}[ht]
\centering
    \begin{subfigure}{0.45\textwidth}
        \includegraphics[width=\textwidth]{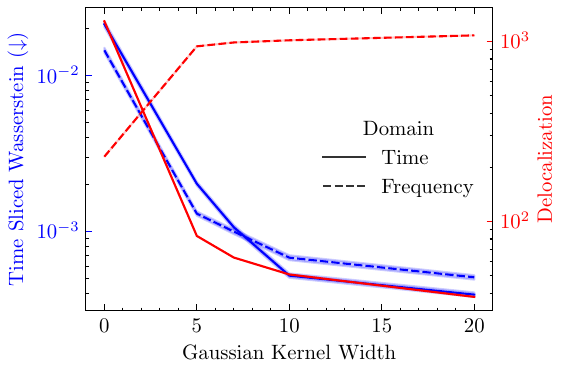}
        \caption{Time Domain.}
        \label{subfig:ecg_conv_time}
    \end{subfigure}
    \hfill
    \begin{subfigure}{0.45\textwidth}
        \includegraphics[width=\textwidth]{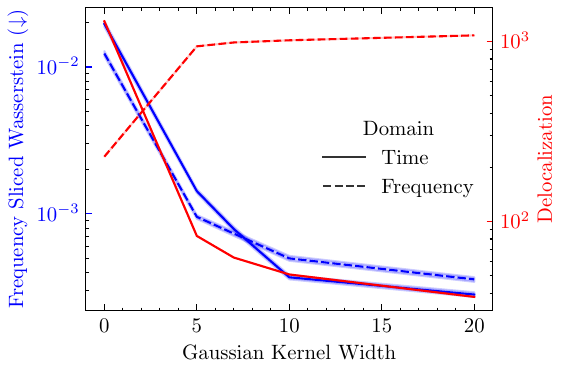}
        \caption{Frequency Domain.}
        \label{subfig:ecg_conv_freq}
    \end{subfigure}
    \caption{Sliced Wasserstein distances of time and frequency models (\textcolor{blue}{blue}) and localization metrics in time and frequency domains (\textcolor{red}{red}) when smoothing the spectral representations of the time series with Gaussian kernels of variable width. Increasing the kernel width removes the localization in the frequency domain and increases the localization in the time domain. Coincidentally, the time diffusion model becomes better than the frequency diffusion model.}
    \label{fig:ecg_conv} 
\end{figure*}

In this section, we empirically analyze the effect of performing time series diffusion in the frequency domain.  In \cref{subsec:sample_quality}, we show that frequency diffusion models better capture their training distribution than time models. In \cref{subsec:sample_differences}, we argue that these differences of performance can be attributed the localization of the time series in the frequency domain. Finally, in \cref{subsec:when_frequency}, we artificially create settings where time models outperform the frequency models in order to confirm this hypothesis. The code necessary to reproduce the results, along with detailed instructions, is provided in the following repository: \url{https://github.com/JonathanCrabbe/FourierDiffusion}.

\textbf{Data.} To illustrate the breadth of time series applications, we work with 6 different datasets described in \cref{tab:dataset_description}. We observe that these datasets cover many use-cases (healthcare, finance, engineering and climate modelling), sample sizes, sequence lengths $N$ and  number of features tracked over time $M$. All the datasets are standardized before being fed to models. We also split the datasets into a training set $\D{train}$ and a validation set $\D{val}$. We provide more details on the datasets in \cref{subapp:datasets_details}.

\textbf{Models.} For each dataset, we parametrize the time score model $\s_{\theta}$ and the frequency score model $\tilde{\s}_{\tilde{\theta}}$ as transformer encoders with 10 attention and MLP layers, each with 12 heads and dimension $d_{\mathrm{model}} = 72$. Both models have learnable positional encoding as well as diffusion time $t$ encoding through random Fourier features composed with a learnable dense layer. This results in models with $3.2$M parameters. We use a VP-SDE with linear noise scheduling and $\beta_{\mathrm{min}} = 0.1$ and $\beta_{\mathrm{max}} = 20$, as in \cite{song2020score}. The score models are trained with the denoising score-matching loss, as defined in \cref{sec:theory}. All the models are trained for 200 epochs with batch size $64$, AdamW optimizer and cosine learning rate scheduling ($20$ warmup epochs, $\mathrm{lr_{max}} = 10^{-3}$). The selected model is the one achieving the lowest validation loss.

\textbf{Time and frequency.} Crucially, the only difference between the time and the fequency diffusion models is the domain in which their input time series are represented. Since all datasets are expressed in the time domain, they can directly be fed to the time diffusion model $\s_{\theta}$. When it comes to the frequency diffusion model $\tilde{\s}_{\tilde{\theta}}$, the data is first mapped to the frequency domain by applying a DFT $\F$ on each time series. In the time domain, the forward and reverse diffusion obey the SDEs in \cref{eq:forward_process,eq:backward_process}. In the frequency domain, the forward and reverse diffusion obey the modified SDEs in \cref{eq:fourier_forward,eq:fourier_backward}. The denoised samples $\tilde{\x}(0)$ obtained in the frequency domain can be pulled back to the time domain by applying an inverse DFT $\tilde{\x}(0) \mapsto \F^{-1}[\tilde{\x}(0)]$. In the following, we shall denote by $\S{time} \subset \R^{d_X}$ and $\S{freq} \subset \R^{d_X}$ the time representation of the samples generated by the time and frequency models. Similarly, we shall denote by $\tildeS{time} := \F[\S{time}]$ and $\tildeS{freq} := \F[\S{freq}]$ the frequency representations of these time series. We sample $|\S{time}| = |\S{freq}| = 10,000$ samples for each model by applying $T = 1,000$ diffusion time steps.

\subsection{Which samples better capture the distribution?} \label{subsec:sample_quality}

\textbf{Methodology.} We are interested in the faithfulness of the samples generated by the time and frequency diffusion models. Ideally, this faithfulness should be evaluated by computing the Wasserstein distance between the true distribution and the distribution spanned by our diffusion models. However, this is impossible since the exact computation of the Wasserstein distance in intractable in input spaces of large dimension $d_X \gg 1$. In the case of images, \cite{heusel2017gans} mitigates these problems by mapping all the images in a lower dimensional representation space (the activation of the penultimate layer of an Inception-V3 model). This crucially relies on the fact that the Inception-V3 provides high-quality representations of images. Unfortunately, such a general representation of time series does not exist in practice. Hence, our evaluation needs to be performed in the input space $\R^{d_X}$ directly. For this reason, we shall rely on the sliced Wasserstein distance introduced by \cite{bonneel2015sliced}, which has similar properties to the Wasserstein distance and can be efficiently estimated in high dimension spaces. With a slight abuse of notation, we shall denote by $SW(\S{1}, \S{2})$ the sliced Wasserstein distances between the empirical distributions corresponding to the samples $\S{1}$ and $\S{2}$. Its detailed definition is provided in \cref{subapp:eval_details}. 

\textbf{Analysis.} The sliced Wasserstein distances are reported in \cref{tab:sliced_wasserstein}. Interestingly, we observe that the frequency diffusion models consistently outperform the time diffusion models for all datasets both in the time and the frequency domain. In \cref{subapp:more_plots}, we show that using marginal Wasserstein distances instead of sliced Wasserstein distances essentially leads to the same conclusion. In order to verify that our observations are not specific to transformer backbones, we reproduce the same experiment with LSTM backbones in \cref{app:other_backbone} and obtain similar results showing the superior performance of frequency diffusion models. While this observation already suggests the benefits of diffusing in the frequency domain rather than in the time domain, it is important to understand how these performance gains emerge. For this, we need to gain a better understanding of the training distributions, which is the object of next section.

\subsection{How to explain the differences?} \label{subsec:sample_differences}

\textbf{Signal energy analysis.} Before formulating an hypothesis as to why the frequency models are better, it is helpful to gain a better understanding about the training distribution $\D{train}$. To that end, we leverage the energy and spectral densities related to the time series, described in \cref{sec:background}. These densities are represented in \cref{fig:energy_spectrum_datasets}, where we have averaged the densities over all time series in $\D{train}$. By analyzing \cref{subfig:spectral_density_datasets}, we make a key observation: for all datasets, most of the time series energy in the frequency domain is localized on the frequency $\omega_0 = 0$ also known as the \emph{fundamental frequency}. Furthermore, we observe that the energy quickly decays as the frequency increases. This observation suggests that the low frequencies capture most of the time series information. This is in stark contrast with the energy distribution over time in \cref{subfig:energy_density_datasets}, which is more uniform over time for all the datasets. This asymmetry between the time series spectral localization and their temporal delocalization is a promising candidate to explain the superior performances of frequency diffusion. We now make this observation more quantitative.

\textbf{Quantitative signal localization.} In order to measure how delocalized a time series $\x \in \R^{d_X}$ is in the time domain, we shall use the \emph{delocalization metrics} introduced by \cite{nam2013}:
\begin{align} \label{eq:delocalization}
    \deloc{time}(\x) := \min_{\tau \in [N]} \frac{1}{\| \x \|^2} \sum_{\tau' \in [N]} d_{\mathrm{cyc}}(\tau , \tau') \| \x_{\tau'} \|_2^2, 
\end{align}
where $d_{\mathrm{cyc}}: [N]^2 \rightarrow [N]$ is the cyclic distance defined as $d_{\mathrm{cyc}}(\tau , \tau') = \min (|\tau - \tau'|, N - |\tau - \tau'|)$, $\| \cdot\|$ denotes the Frobenius norm and $\| \cdot \|_2$ the Euclidean norm. Similarly, we can compute the delocalization in the frequency domain $\deloc{freq}$ by replacing $\x \mapsto \tilde{\x}$ and $\tau , \tau' \mapsto \kappa , \kappa'$ in \cref{eq:delocalization}. We report these delocalization metrics for each dataset in \cref{fig:delocalization_datasets}. This quantitative analysis confirms our previous observations: the time series in all the datasets appear significantly more localized in the frequency domain. Interestingly, we never observe a time series that is localized in both the frequency and time domain simultaneously. This is in agreement with the \emph{uncertainty principle} from \cite{nam2013}, which echoes the foundational work of \cite{heisenberg1927}. This verification is made in \cref{subapp:more_plots}.

\textbf{A localization explanation.} Based on the previous observation, we postulate that higher localization of the time series in the frequency domain contributes to the superior performance of frequency diffusion models. While we will test this hypothesis in the next section, it is useful to discuss the intuition behind it first. Due to the frequency localization, the frequency score model is presented with a representation of the time series where most of the relevant information is aligned with few components of the model's input (i.e. the lower frequencies, especially the fundamental). This is in contrast with the time model, which is presented with an input where all the components matter equally. It follows that the frequency model does not need to learn a good distribution over all frequencies in order to generate samples of high quality, provided the lower frequency distributions are properly learned. The time model, on the other hand, needs to model all the time steps accurately in order to generate high-quality samples. If this intuition is correct, it would imply that delocalizing the signal in the frequency domain should reduce the gap of performance between time and frequency models. This is analyzed in the next section.

\subsection{Should we always diffuse in the frequency domain?} \label{subsec:when_frequency}

\textbf{Removing spectral localization.} We would like to assess whether the localization of these signals in the frequency domain contribute to explain the superior performances of frequency diffusion models over time diffusion models. To test this hypothesis, we intervene on a given dataset with the objective of varying the localization of frequency and time representations. With this in mind, it is useful to start from a dataset where the imbalance between time and frequency localization is not severe. By looking at \cref{fig:delocalization_datasets}, it is clear that the ECG dataset is the best candidate. In order to gradually remove the spectral localization from the ECG dataset $\D{ECG}$, we convolve the time series $\x$ in the frequency domain with Gaussians of increasing kernel width $\sigma \in \R^+$ and define $\x^{\sigma} := \F^{-1}  \left[ \F[\x] \ \star \ \mathbf{g}^\sigma \right] $, where $\star$ denotes the convolution between two signals and $g^{\sigma}_\kappa := Z^{-1} \exp[- \kappa^2 /(2 \sigma^2)]$ for all $\kappa \in [N]$ is a Gaussian kernel with normalization $Z = \sum_{\kappa=1}^{N} \exp[- \kappa^2 /(2 \sigma^2)]$. This results in a family of corrupted datasets $\D{ECG}^\sigma$, where the localization in the frequency domain decreases as $\sigma$ increases. This is indeed what we observe in \cref{fig:ecg_conv} with the \textcolor{red}{red curves}. Coincidentally, the delocalization decreases in the time domain, in agreement with the uncertainty principle. The two curves cross at $\sigma \approx 2$, beyond which the time series are more localized in the time domain. Let us now analyze how the model performances evolve with different values of $\sigma$. 

\textbf{Analysis.} We train time and frequency diffusion models on the datasets $\D{ECG}^\sigma$ for $\sigma \in \{ 0, 5, 7, 10, 20\}$, where $\sigma=0$ corresponds to the original ECG dataset: $\D{ECG}^{\sigma = 0} = \D{ECG}$. As in \cref{subsec:sample_quality}, we measure the quality of $10,000$ samples $\S{}$ produced by these models with the Wasserstein distances $SW(\D{ECG}^\sigma, \S{})$ in \cref{subfig:ecg_conv_time} and $SW(\tildeD{ECG}^\sigma, \tildeS{})$ in \cref{subfig:ecg_conv_freq}. By inspecting the \textcolor{blue}{blue curves} from these figures, we notice that decreasing the frequency closes the gap between the time and the frequency models. Moreover, the two curves cross around at $\sigma \approx 7$, beyond which the time model outperforms the frequency model. This confirms our hypothesis that the localization (at least) partially explains the better performance of the frequency diffusion model.

\textbf{A cautionary remark.} Before concluding, we would like to incorporate a bit of nuance in our analysis. While the above results suggest that localization is an important factor to explain the superior performance of frequency diffusion models, we \emph{do not} claim that this is the only explanation. There are essentially two things that suggest that this only partially explains the observations from \cref{subsec:sample_quality}. (1)~In \cref{fig:ecg_conv}, the blue curves and the red curves don't cross for the same value of $\sigma$ (i.e. the red curves cross at $\sigma \approx 2$ and the blue curves at $\sigma \approx 7$). Hence, the time model requires time series that are substantially more localized in the time domain in order to outperform the frequency model. (2)~In the limit $\sigma \rightarrow + \infty$, the time series become constant in the frequency domain. While this corresponds to a minimal localization in the frequency model, this should also be very easy for a frequency diffusion model to learn. Hence, decreasing the localization of the time series in a given domain does not necessarily imply that the resulting time series are more difficult to model in that domain.

\begin{mdframed}[style=takeaway]
\takeaway{2} For all datasets in this paper, diffusing in the frequency domain yields better performances than in the time domain. A promising explanation for this is that time series from these datasets are substantially more localized in their spectral representation than in their temporal representation.
\end{mdframed}

\section{Discussion}
\label{sec:discussion}
In this work, we have improved the understanding of how diffusion models should be used with time series. We constructed a theoretical framework that extends the score-based SDE formulation of diffusion to complex-valued times series representations in the frequency domain. We have then demonstrated empirically that implementing time series diffusion in the frequency domain consistently outperforms the canonical diffusion in the time domain. Finally, we showed that the spectral localization of the time series plays a significant role to explain this phenomenon. There is a number of interesting ways to extend our work.

\textbf{Time localized datasets.} While all 6 datasets we have studied appear substantially more localized in their spectral representation, we do not claim that is is a universal property of real-world time series. In particular, it would be of interest to survey a large amount of time series datasets to determine the extent to which this phenomenon occurs.

\textbf{Latent diffusion.} Latent diffusion has emerged as a fruitful direction of research in the diffusion literature \cite{rombach2022video}. A promising direction would be to study how spectral representations of time series can be incorporated to their latent representations and whether this benefits the quality of the generated samples. We leave these insightful research directions for future work.

\clearpage
\section*{Acknowledgments}
Jonathan Crabbé and Jan Stanczuk are funded by Aviva, Nicolas Huynh by Illumina  and Mihaela van der Schaar by the Office of Naval Research (ONR), NSF 172251.

\bibliographystyle{unsrt}  
\bibliography{bibliography}  

\newpage
\appendix

\section{Mathematical details} \label{app:math_details}
\subsection{DFT properties} \label{subapp:theory_dft}

\begin{proposition}[Unitarity of the DFT operator] \label{prop:unitary_U} The DFT matrix $U \in \C^{N \times N}$ with elements $[U]_{\kappa \tau} := N^{- 1 / 2} \exp (- i \omega_{\kappa} \tau )$ with $\omega_{\kappa} := \frac{\kappa 2 \pi}{N}$ is unitary.
\end{proposition}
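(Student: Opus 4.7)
The plan is to verify unitarity by direct computation of the entries of $UU^*$, and then appeal to the fact that $U$ is square to conclude $U^*U = I_N$ automatically.

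First I would expand the matrix product as
\[
[UU^*]_{\kappa \kappa'} = \sum_{\tau=0}^{N-1} [U]_{\kappa \tau} \overline{[U]_{\kappa' \tau}} = \frac{1}{N} \sum_{\tau=0}^{N-1} \exp\!\left(i \frac{2\pi(\kappa' - \kappa)}{N} \tau \right),
\]
which reduces the claim to the classical orthogonality relation for $N$-th roots of unity. In the diagonal case $\kappa = \kappa'$, every summand equals $1$ and the sum collapses to $N$, yielding $1$ after the $1/N$ normalisation. In the off-diagonal case $\kappa \neq \kappa'$ with both in $[N]$, the ratio $\omega := \exp(i 2\pi(\kappa' - \kappa)/N)$ satisfies $\omega \neq 1$ but $\omega^N = 1$, so the geometric sum gives $(1-\omega^N)/(1-\omega) = 0$. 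Combining the two cases yields $[UU^*]_{\kappa \kappa'} = \delta_{\kappa \kappa'}$, i.e. $UU^* = I_N$.

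The final step is to conclude $U^*U = I_N$ as well. Since $U$ is a finite square matrix, $UU^* = I_N$ already implies that $U^*$ is a two-sided inverse of $U$, hence $U^*U = I_N$ holds automatically; no additional computation is needed. The only subtlety in the argument is keeping track of the sign in the exponent when conjugating $[U]_{\kappa'\tau}$ and ensuring the indices $\kappa - \kappa'$ stay in the range where $\omega \neq 1$, but this is routine. Overall I expect no genuine obstacle: the result is the standard orthogonality of the Fourier basis, and the main thing to get right is simply bookkeeping of the normalisation constant $N^{-1/2}$.
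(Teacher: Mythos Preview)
Your proposal is correct and follows essentially the same route as the paper: both compute $[UU^*]_{\kappa\kappa'}$ directly, reduce it to a geometric sum of $N$-th roots of unity, and distinguish the diagonal and off-diagonal cases. The only cosmetic differences are that you invoke the geometric-series formula explicitly and spell out why $UU^*=I_N$ implies $U^*U=I_N$ for square $U$, whereas the paper leaves these implicit.
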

\begin{proof}
Let $U^*$ denote the conjugate transpose of $U$.
    For any $\kappa$ and $\tau$ in $[N]$, we have:
    
    \begin{align*}
    [UU^*]_{\kappa \tau} 
    &= \sum_{\beta = 0}^{N-1}[U]_{\kappa \beta}[U^*]_{\beta \tau} \\
    &= \frac{1}{N} \sum_{\beta = 0}^{N-1}\exp(-i\omega_{\kappa} \beta ) \exp(i \omega_{\beta} \tau) \\
    &= \frac{1}{N} \sum_{\beta=0}^{N-1}\exp(-i\omega_{\kappa - \tau}\beta)
      \end{align*}
Hence, if $\kappa = \tau$, we have $[UU^*]_{\kappa \tau} = 1$, otherwise $[UU^*]_{\kappa \tau} = 0$, since $\exp(-i\omega_{\kappa-\tau}N) = 1$. This is equivalent to $UU^* = I_{N}$, i.e. $U$ is unitary.
\end{proof}

\begin{proposition} \label{prop:symmetry}
The DFT $\tilde{\x} = \F[\x] = U \x$  of a real-valued time series $\x \in \R^{d_X}$ verifies the following \emph{mirror symmetry} for all $\kappa \in [N]$: 
\begin{equation*}
    \tilde{\x}_{\kappa} = \tilde{\x}_{N-\kappa}^*.
\end{equation*}
\end{proposition}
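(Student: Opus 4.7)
The plan is to prove the mirror symmetry directly from the defining sum of the DFT, exploiting two elementary facts: that $e^{-2\pi i \tau}=1$ for integer $\tau$, and that complex conjugation is trivial on real numbers. This is essentially a one-line computation once the right manipulation is identified, so the main task is to lay it out cleanly and deal with the boundary index $\kappa=0$ correctly.

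First, I would fix $\kappa \in [N]$ and write out
\begin{equation*}
\tilde{\x}_{N-\kappa} = \frac{1}{\sqrt{N}} \sum_{\tau=0}^{N-1} \x_\tau \exp\!\left(-\frac{2\pi i (N-\kappa)\tau}{N}\right).
\end{equation*}
The key simplification is to split the exponential as $\exp(-2\pi i \tau)\exp(2\pi i \kappa \tau / N)$, and observe that $\exp(-2\pi i \tau) = 1$ since $\tau$ is an integer. This collapses the expression to
\begin{equation*}
\tilde{\x}_{N-\kappa} = \frac{1}{\sqrt{N}} \sum_{\tau=0}^{N-1} \x_\tau \exp\!\left(\frac{2\pi i \kappa \tau}{N}\right).
\end{equation*}

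Next I would take the complex conjugate of both sides. Since the coefficients $\x_\tau \in \R$ satisfy $\x_\tau^* = \x_\tau$, only the exponential gets conjugated, flipping the sign in its argument:
\begin{equation*}
\tilde{\x}_{N-\kappa}^* = \frac{1}{\sqrt{N}} \sum_{\tau=0}^{N-1} \x_\tau \exp\!\left(-\frac{2\pi i \kappa \tau}{N}\right) = \tilde{\x}_\kappa,
\end{equation*}
which is the claimed identity.

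Finally, I would address the boundary case $\kappa = 0$, where $N - \kappa = N$ lies outside $[N]$ and the convention $\tilde{\x}_N := \tilde{\x}_0$ from the main text is invoked. In that case the identity reduces to $\tilde{\x}_0 = \tilde{\x}_0^*$, which holds because $\tilde{\x}_0 = N^{-1/2}\sum_{\tau} \x_\tau$ is a real number. If $N$ is even, one may additionally remark that setting $\kappa = N/2$ yields $\tilde{\x}_{N/2} = \tilde{\x}_{N/2}^*$, so the Nyquist component is also forced to be real, which is consistent with the discussion preceding Lemma~\ref{lemma:brownian_transform}. There is no substantive obstacle here; the only thing to be careful about is the indexing convention at $\kappa = 0$.
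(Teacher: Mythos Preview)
Your proof is correct and follows essentially the same approach as the paper: both compute $\tilde{\x}_{N-\kappa}^*$ directly from the defining sum, use that $\exp(-2\pi i \tau)=1$ for integer $\tau$ (equivalently $\omega_N\tau \in 2\pi\mathbb{Z}$), and exploit $\x_\tau^* = \x_\tau$. The only cosmetic difference is the order of operations (you simplify the exponential then conjugate, the paper conjugates then simplifies), and you add a careful treatment of the $\kappa=0$ boundary and the Nyquist case, which the paper leaves implicit via the convention $\tilde{\x}_N := \tilde{\x}_0$.
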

\begin{proof}
    Let $\kappa$ and $\tau$ be in $[N]$. We first note that $\exp(i\omega_{N-\kappa}\tau) = \exp(i (\omega_{N} -\omega_{\kappa}) \tau) = \exp(-i \omega_{\kappa} \tau)$. Hence,  
    \begin{align*}
    \tilde{\x}_{N-\kappa}^* &= \sum_{\tau = 0}^{N-1} [U]_{N-\kappa, \tau}^{*} \x_{\tau} &&\text{($\x$ is real valued)} \\ 
    &=N^{-1/2}\sum_{\tau = 0}^{N-1} \exp(i\omega_{N-\kappa}\tau) \x_{\tau} \\
    &= N^{-1/2}\sum_{\tau = 0}^{N-1} \exp(-i\omega_{\kappa}\tau) \x_{\tau} \\
    &= \sum_{\tau=0}^{N-1} [U]_{\kappa, \tau} \x_{\tau} \\
    &= \tilde{\x}_{\kappa}
    \end{align*}
\end{proof}

\subsection{Densities and scores for constrained signals} \label{subapp:theory_submanifold}

As we have mentioned in \cref{sec:background}, the redundancy between certain components of the DFT $\tilde{\x} \in \C^{d_X}$ of $\x$ expressed by \cref{eq:real_constraint} needs to be taken into account if we wish to define a density $\tilde{p}$ for the time series distribution in the frequency domain. In particular, this redundancy implies that the density is really defined on a submanifold $\Cconstr^{d_X} := \{ \tilde{\x} = (\tilde{\x}_0, \dots, \tilde{\x}_{N-1}) \in \C^{d_X} \mid \tilde{\x}_{\kappa} = \tilde{\x}^*_{N - \kappa} \forall \kappa \in [N]\}$ of complex signals fulfilling the constraint. We can define a coordinate chart $\trunc : \Cconstr^{d_X} \rightarrow \R^{d_X}$ on this submanifold by extracting the unconstrained part of a DFT $\tilde{\x}$. This operator simply concatenates the relevant real and imaginary parts of the DFT and is defined as follows for all $\tilde{\x} \in \Cconstr^{d_X}$:
\begin{align} \label{eq:coord_chart}
  \trunc[\tilde{\x}] = \begin{cases}
     \left( \Re[\tilde{\x}_\kappa] \right)_{\kappa = 0}^{N/2} \oplus \left(\Im [\tilde{\x}_\kappa] \right)_{\kappa = 1}^{N/2 - 1}  & \text{ if } N \in 2 \N \\
     \left( \Re[\tilde{\x}_\kappa] \right)_{\kappa = 0}^{\lfloor N/2 \rfloor} \oplus \left(\Im [\tilde{\x}_\kappa] \right)_{\kappa = 1}^{\lfloor N/2 \rfloor}  & \text{ else,}
    \end{cases}
\end{align}
where $\boldv_1 \oplus \boldv_2$ denotes the concatenation of two vectors $\boldv_1 \in \R^{d_1}$ and $\boldv_2 \in \R^{d_2}$, with $d_1, d_2 \in \N$. Due to \cref{eq:real_constraint}, one can unambiguously reconstruct $\tilde{\x} \in \Cconstr^{d_X}$ from $\trunc[\tilde{\x}]$. Hence, the coordinate chart admits an inverse $\trunc^{-1}: \R^{d_X} \rightarrow \Cconstr^{d_X}$ defined as follows for all $\z = (\z_0, \dots, \z_{N-1}) \in \R^{d_X}$:
\begin{align} \label{eq:inv_coord_chart}
  \trunc^{-1}[\z] = \begin{cases}
     (\z_0) \oplus (\z_\kappa +  i \cdot \z_{N/2 + \kappa})_{\kappa=1}^{N/2 - 1} \oplus (\z_{N/2}) \oplus (\z_{N/2 - \kappa} -  i \cdot \z_{N - \kappa})_{\kappa=1}^{N/2 - 1} & \text{ if } N \in 2 \N \\
      (\z_0)  \oplus (\z_\kappa +  i \cdot \z_{\lfloor N/2 \rfloor + \kappa})_{\kappa=1}^{\lfloor N/2 \rfloor} \oplus (\z_{\lceil N/2 \rceil - \kappa} -  i \cdot \z_{N - \kappa})_{\kappa=1}^{\lfloor N/2 \rfloor} & \text{ else.}
    \end{cases}
\end{align}
With this coordinate chart, it becomes possible to rigorously define the density $\tilde{p} : \Cconstr^{d_X} \rightarrow \R^+$. One simply defines a probability density $\tilde{p}_{\trunc}: \R^{d_X} \rightarrow \R^+$ over the real vector space $\R^{d_X}$ on which the coordinate chart is defined and pull it back to the manifold of constrained signals $\tilde{p} := \tilde{p}_{\trunc} \circ \trunc$. This indeed defines a density that depends on the real and imaginary parts of the frequency representations $\tilde{\x} \in \Cconstr^{d_X}$ of time series, as announced in \cref{sec:background}.

Finally, it remains to rigorously define the score $\tilde{\s} : \Cconstr^{d_X} \times [0, T] \rightarrow \C^{d_X}$ in the frequency domain. This can be done by starting from the real score $\tilde{\s}_{\trunc}: \R^{d_X} \times [0, T] \rightarrow \R^{d_X}$ that is well-defined for all $\z \in \R^{d_X}$ and $t \in [0,T]$ as  $\tilde{\s}_{\trunc}(\z, t) = \nabla_{\z} \log \tilde{p}_{\trunc, t}(\z)$. Again, one can expand this vector field to the constrained manifold by defining for all $\tilde{\x} \in \Cconstr^{d_X}$ and all $t \in [0, T]$:

\begin{align} \label{eq:score_dft_def}
    \tilde{\s}(\tilde{\x}, t) := \trunc^{-1}[\tilde{\s}_{\trunc}(\trunc(\tilde{\x}), t)].
\end{align}

This indeed defines a vector field involving partial derivatives of the log density with respect to the real and imaginary parts of the frequency representations $\tilde{\x} \in \Cconstr^{d_X}$ of time series and respects the mirror symmetry by virtue of \cref{eq:inv_coord_chart}. Everything is then consistent with the discussion from \cref{sec:background}.

\subsection{Diffusion SDEs in the frequency domain} \label{subapp:theory_sdes}

\dftbrownianlemma*
\begin{proof}
\textbf{(1) Mirror Symmetry.} This point directly follows from the symmetry of the DFT, proved in \cref{prop:symmetry}.

 In what follows, we consider without loss of generality the case $M=1$, since the cases $M>1$ can be handled similarly by flattening matrices into vectors and using the same arguments as below. Let us first decompose $U$ into its real and imaginary parts, i.e. $U = U_{re} + i U_{im}$, where $U_{re}$ and $U_{im}$ are in $\mathbb{R}^{N \times N}$. Note that these two matrices are both symmetric. Computing the distribution of the components of $\boldv$ will require the knowledge of covariance matrices, which will depend on $U_{re}^2$, $U_{im}^2$ and $U_{re}U_{im}$. For any $\kappa$ and $\tau$ in $[N]$, 
\begin{align*}
    [U_{re}^{2}]_{\kappa, \tau} &= \frac{1}{N} \sum_{\gamma = 0}^{N-1} \cos(\frac{2\pi}{N}\kappa \gamma) \cos(\frac{2\pi}{N}\tau \gamma)  \\
    &= \frac{1}{2N} \sum_{\gamma = 0}^{N-1} \bigl( \cos(\frac{2\pi}{N}(\kappa+\tau)\gamma) + \cos(\frac{2\pi}{N}(\kappa -\tau)\gamma) \bigl)
\end{align*}

Similarly, \begin{align*}
    [U_{im}^{2}]_{\kappa, \tau} &= \frac{1}{N} \sum_{\gamma = 0}^{N-1} \sin(\frac{2\pi}{N}\kappa \gamma) \sin(\frac{2\pi}{N}\tau \gamma)  \\
    &= \frac{1}{2N} \sum_{\gamma = 0}^{N-1} \bigl( \cos(\frac{2\pi}{N}(\kappa-\tau)\gamma)  - \cos(\frac{2\pi}{N}(\kappa + \tau)\gamma) \bigl)
\end{align*}
To compute these sums, we consider the situations where:
\begin{itemize}
    \item $N \vert \kappa - \tau$: this is equivalent to \underline{$\kappa = \tau$}, since $-(N-1) \leq \kappa -\tau \leq N-1$
    \item $N \vert \kappa + \tau$: this is equivalent to \underline{$\kappa = \tau = 0$} or \underline{$\kappa + \tau = N$}, since $0 \leq \kappa + \tau \leq 2(N-1)$
\end{itemize}
Hence,  
if $N$ is even:
\begin{equation}
  [U_{re}^{2}]_{\kappa, \tau} =
    \begin{cases}
      1 & \text{if $\kappa = \tau = 0$ or $\kappa = \tau = \frac{N}{2}$} \\
      \frac{1}{2} & \text{if $\kappa \notin \{0, N/2\}$ and $\kappa = \tau$ or $\kappa = N- \tau$ }\\
      0 & \text{otherwise}
    \end{cases}  
    \text{and }
    [U_{im}^{2}]_{\kappa, \tau} =
    \begin{cases}
      0 & \text{if $\kappa = \tau = 0$ or $\kappa = \tau = \frac{N}{2}$} \\
      \frac{1}{2} & \text{if $\kappa \notin \{0, N/2\}$ and $\kappa = \tau$}\\
      -\frac{1}{2} & \text{if $\kappa \neq \ N/2$ and $\kappa = N - \tau$}\\
      0 & \text{otherwise}
    \end{cases}     
\end{equation}
If $N$ is odd: \begin{equation}
  [U_{re}^{2}]_{\kappa, \tau} =
    \begin{cases}
      1 & \text{if $\kappa = \tau = 0$} \\
      \frac{1}{2} & \text{if $\kappa \neq 0$ and $\kappa = \tau$, or $\kappa = N- \tau$ }\\
      0 & \text{otherwise}
    \end{cases}
    \text{and }
    [U_{im}^{2}]_{\kappa, \tau} =
    \begin{cases}
      0 & \text{if $\kappa = \tau = 0$} \\
      \frac{1}{2} & \text{if $\kappa \neq 0$ and $\kappa = \tau$}\\
      -\frac{1}{2} & \text{if $\kappa = N - \tau$}\\
      0 & \text{otherwise}
    \end{cases}    
\end{equation}

Finally, we compute $U_{re}U_{im}$:

\begin{align*}
    [U_{re}U_{im}]_{\kappa, \tau} &= \frac{1}{N} \sum_{\gamma = 0}^{N-1} \cos(\frac{2\pi}{N}\kappa \gamma) \sin(\frac{2\pi}{N}\tau \gamma)  \\
    &= \frac{1}{2N} \sum_{\gamma = 0}^{N-1} \bigl( \sin(\frac{2\pi}{N}(\tau-\kappa)\gamma)  + \sin(\frac{2\pi}{N}(\kappa + \tau)\gamma) \bigl) \\
    &= 0
\end{align*}
Hence, $U_{re}U_{im} = 0_{N}$ and similarly $U_{im}U_{re} = 0_{N}$ by taking the transpose and using the symmetry of $U_{re}$ and $U_{im}$. We can now characterize the distribution followed by the stochastic process $\boldv$. We first write $U_{col} = \begin{pmatrix} U_{re} \\ U_{im}\end{pmatrix} $, and then notice that $\boldv$ can be investigated through the lens of its flattened version $\boldv_{flat} = U_{col}\boldw$, which is a stochastic process in $\R^{2N}$. 

\textbf{(2) Real Brownian Motion.} First, ${\boldv}_{0}$ is real-valued, by using \textbf{(1) Mirror Symmetry}. We then have the following:
\begin{itemize}
    \item ${\boldv}_{0}(0) = 0$ almost surely: this stems from $\boldw(0) = 0$ almost surely, since $\boldw$ is a multivariate standard Brownian motion and $\boldv = U \boldw$.
    \item Continuity of $t \rightarrow {\boldv}_{0}(t)$ almost surely:  $\boldw$ satisfies the continuity property and the DFT operator (seen as a complex operator) is linear, hence ${\boldv}_{0}$ is also continuous with respect to $t$ almost surely.
    \item Stationary and independent increments: this follows from the linearity of the DFT operator and $\boldw$ being a Brownian motion.
    \item For any $t,s \geq 0$, ${\boldv}_{0}(t+s) - {\boldv}_{0}(s) \sim \mathcal{N}(0, t)$: to see this, we notice that ${\boldv}_{0}(t+s) - {\boldv}_{0}(s)$ is Gaussian\footnote{Note that a random variable almost surely equal to $0$ can be seen as a degenerate Gaussian with mean $0$ and variance $0$.} since it is a linear transform of  $\boldw(t+s) - \boldw(s)$. Moreover, its mean and its variance are given by:
    \begin{align*}
        \mathbb{E}\bigr[{\boldv}_{0}(t+s) - {\boldv}_{0}(s)\bigr] &=  \bigr[U_{col}\mathbb{E}[\boldw(t+s)-\boldw(s)]\bigr]_{0} \\ &= 0
    \end{align*}
    \begin{align*}
        \text{Var}({\boldv}_{0}(t+s) - {\boldv}_{0}(s)) &= \bigr[U_{col}\text{Cov}\bigl(\boldw(t+s)-\boldw(s), \boldw(t+s)-\boldw(s)\bigl)U_{col}^{T})\bigr]_{0,0} \\ &= t[U_{col} U_{col}^{T}]_{0,0} \\ &= t[U_{re}^2]_{0,0} \\ &= t
    \end{align*}
\end{itemize}
Hence, we have shown that $\boldv_{0}$ is a real Brownian motion.

\textbf{(3) Complex Brownian Motions.} Let $1 \leq \kappa < \lfloor N/2 \rfloor$. Then, $\Re(\boldv_{\kappa})$ and $\Im(\boldv_{\kappa})$ follow the first three properties of a Brownian motion, using the same arguments as above. For the last point, we first characterize the distribution of $\Re(\boldv_{\kappa})$ and $\Im(\boldv_{\kappa})$, and then show that they are independent.

    \emph{Distribution of $\Re(\boldv_{\kappa})$.} $\sqrt{2}\Re(\boldv_{\kappa})$ is a standard Brownian motion in $\mathbb{R}$:   For any $t,s \geq 0$, $\Re(\boldv_{\kappa})(t+s) - \Re(\boldv_{\kappa})(s)$  is  Gaussian since it is a linear transform of $\boldw(t+s)-\boldw(s)$ which is a Gaussian vector. 
    We can compute its mean and its variance: 
    \begin{align*}
        \mathbb{E}\bigr[\Re(\boldv_{\kappa})(t+s) - \Re(\boldv_{\kappa})(s)\bigr] &= \bigr[U_{col}\mathbb{E}[\boldw(t+s)-\boldw(s)]\bigr]_{\kappa} \\ &= 0
    \end{align*}
    \begin{align*}
        \text{Var}\bigl(\Re(\boldv_{\kappa})(t+s) - \Re(\boldv_{\kappa})(s)\bigl) &= \bigr[ U_{col}\text{Cov}\bigl(\boldw(t+s)-\boldw(s), \boldw(t+s)-\boldw(s)\bigl)U_{col}^{T}) \bigr]_{\kappa,\kappa} \\ &=  t[U_{col} U_{col}^{T}]_{\kappa,\kappa} \\ &= t[U_{re}^2]_{\kappa,\kappa} \\ & = \frac{1}{2}t
    \end{align*}
    
    \emph{Distribution of $\Im(\boldv_{\kappa})$.} Similarly, we can prove with the same arguments that $\sqrt{2}\Im(\boldv_{\kappa})$ is a standard Brownian motion in $\mathbb{R}$.

    \emph{Independence of $\Re(\boldv_{\kappa})$ and $\Im(\boldv_{\kappa})$.} Let $k$ and $m$ be two strictly positive integers, and let $(t_1, ..., t_{k}) \in {(\mathbb{R}^{*}_{+})}^{k}$ and $(t'_1, ..., t'_{m}) \in {(\mathbb{R}^{*}_{+})}^{m}$. 
    We need to show that the vectors $\boldv_{\kappa}^{re} = \bigl(\Re(\boldv_{\kappa})(t_1), ...,  \Re(\boldv_{\kappa})(t_{k})\bigl)$ and $\boldv_{\kappa}^{im} = \bigl(\Im(\boldv_{\kappa})(t'_1), ...,  \Im(\boldv_{\kappa})(t'_{m})\bigl)$ are independent. 
    First, the concatenation of $\boldv_{\kappa}^{re}$ and $\boldv_{\kappa}^{im}$ can be expressed as a linear transform of $(\boldw(t_1), ..., \boldw(t_{k}), \boldw(t'_1) ,..., \boldw(t'_m))$, which is a Gaussian vector since $\boldw$ is a Brownian motion. Consequently, $(\boldv_{\kappa}^{re}, \boldv_{\kappa}^{im})$ is also a Gaussian vector. Now, let $l \in \{1,..., k\}$ and $n \in \{1,...,m \}$. Then, 
    \begin{align*}
        \text{Cov} \bigl(\Re(\boldv_{\kappa})(t_{l}), \Im(\boldv_{\kappa})(t'_{n}) \bigl) &= \bigr[  U_{re} \text{Cov} \bigl(\boldw(t_l), \boldw(t'_n) \bigl) U_{im} \bigr]_{\kappa, \kappa} \\ &= \min(t_l,t'_n) [U_{re}U_{im}]_{\kappa, \kappa} \\ &= 0
    \end{align*}
    Given this covariance structure and the fact that $(\boldv_{\kappa}^{re}, \boldv_{\kappa}^{im})$ is a Gaussian vector, we have $\boldv_{\kappa}^{re} \indep \boldv_{\kappa}^{im}$. Since this holds true for any choice of $(t_1, ..., t_{k}) \in {\mathbb{R}^{*}_{+}}^{k}$ and $(t'_1, ..., t'_{m}) \in {\mathbb{R}^{*}_{+}}^{m}$, we conclude that $\Re(\boldv_{\kappa}) \indep \Im(\boldv_{\kappa})$. The case $k = \lfloor \frac{N}{2} \rfloor$ can be handled using the same arguments, by distinguishing the cases $N$ odd and $N$ even.

\textbf{(4) Independence.} The mutual independence of the stochastic processes $\{\tilde{\boldv}_\kappa\}_{\kappa=0}^{\lfloor N/2 \rfloor}$ follows from the structure of $U_{re}^2$ and $U_{im}^2$. Indeed, for any $m$ and $n$ such that $m \neq n$, $0 \leq m \leq \lfloor \frac{N}{2} \rfloor$, and $0 \leq n \leq \lfloor \frac{N}{2} \rfloor$, we have $[U_{re}^2]_{m,n} = [U_{im}^2]_{m,n} = [U_{re}U_{im}]_{m,n} = 0$. We can then apply the same argument as in 3) of  \textbf{(3) Complex Brownian Motions} to obtain the mutual independence of the stochastic processes $\{\tilde{\boldv}_\kappa\}_{\kappa=0}^{\lfloor N/2 \rfloor}$.
\end{proof}

\dftsde*
\begin{proof}

\emph{Forward SDE.} Since $\tilde{\x} = U \x$, we can apply the multivariate Itô's lemma (Eq. 8.3, \cite{kloeden1992stochastic}), and obtain a forward SDE for $\tilde{\x}$:
\begin{equation} \label{eq:forward_sde_fourier}
    \d \tilde{\x} = U\boldf(\x,t)\d t + g(t)U\d\boldw'
\end{equation}
where we have implicitly used the fact that $\boldG(t) = g(t)I_{N}$ and $U$ commute.
By \cref{lemma:brownian_transform}, $\tilde{\boldv} = U\boldw' $ is a mirrored Brownian motion on $\C^{d_X}$, which gives the result.

\emph{Reverse SDE.} 
In order to derive the reverse SDE for $\tilde{\x}$, we follow three steps: (1) we write a forward SDE for which the truncation $\trunc[\tilde{\x}]$ is a solution ; (2) we write its associated reverse-time SDE \cite{anderson1982} and (3) we derive the full reverse SDE for the stochastic process $\tilde{\x}$.

\underline{Step 1:}
From \cref{eq:forward_sde_fourier} and using \cref{lemma:brownian_transform}, we can extract the following forward SDE for $\trunc[\tilde{\x}]$, which is defined in \cref{eq:coord_chart} :
\begin{equation} \label{eq:forward_sde_truncation}
\d\trunc[\tilde{\x}] = \trunc \bigr[U\boldf(U^*\boldh(\trunc[\tilde{\x}]), t)\bigr]\d t+ g(t)\boldA \d \breve{\boldw}
\end{equation}
where:
\begin{itemize}
\item $\boldh: \R^{d_X} \rightarrow \C^{d_X}$ satisfies $\boldh(\trunc[U\y]) = U\y$ for all $\y \in \R^{d_X}$ as defined in \cref{eq:inv_coord_chart}.
    \item $\boldA \in \R^{N \times N}$ is a diagonal matrix such that $[\boldA]_{\kappa, \kappa} = \begin{cases} 1 & \text{if $\kappa = 0$, or $N$ is even and $\kappa = N/2$} \\
                \frac{1}{\sqrt{2}} & \text{otherwise}
\end{cases}$, 
\item $\breve{\boldw}$ is a stochastic process in $\R^{d_X}$ which satisfies $\boldA\breve{\boldw} = \trunc[U\boldw']$. The above point, together with \cref{lemma:brownian_transform}), then implies that $\breve{\boldw}$ is in fact a standard multivariate Brownian motion.
\end{itemize}   

\underline{Step 2:}
The associated reverse-time SDE \cite{anderson1982} is given by:
\begin{equation}
    \d\trunc[\tilde{\x}] = \bigl\{ \trunc \bigr[ U\boldf(U^*\boldh(\trunc[\tilde{\x}]),t) \bigr] - g(t)^2 \boldA \boldA^{T} \nabla_{\trunc[\tilde{\x}]} \log \tilde{p}_{t}(\trunc[\tilde{\x}]) \bigl\} \d t + g(t)\boldA \d \widehat{\boldw}
\end{equation}
where $\widehat{\boldw}$ is a standard Brownian motion on $\R^{d_X}$.

\underline{Step 3:} Since $\boldh(\trunc[\tilde{\x}]) = \tilde{\x}$, we can recover the reverse-time SDE followed by $\tilde{\x}$ by applying the operator $\boldh$ to $\trunc[\tilde{\x}]$, and using the Itô's lemma:

\begin{equation} \label{eq:reverse_fourier_unsimplified}
    \d \tilde{\x} = \bigl\{ \boldh(\trunc \bigr[ U\boldf(U^*\boldh(\trunc[\tilde{\x}]), t) \bigr]) - g(t)^2 \boldh \bigl(\boldA^2 \nabla_{\trunc[\tilde{\x}]} \log \tilde{p}_{t}(\trunc[\tilde{\x}]) \bigl)\bigl\}\d t + g(t)\boldh(\boldA)\d\widehat{\boldw}
\end{equation}
where we have exploited the fact that $\boldh$ is linear, and with the slight abuse of notation $\boldh(\boldA) \in \C^{N \times N}$, which is the matrix obtained by applying $\boldh$ to the columns of $\boldA$. Using the definition of $\boldh$, \cref{eq:reverse_fourier_unsimplified} can further be simplified into:
\begin{equation} \label{eq:reverse_fourier_before_commute}
\d \tilde{\x} =  \bigl\{ U\boldf(U^*\tilde{\x}, t) - g(t)^2 \boldh \bigl(\boldA^2 \nabla_{\trunc[\tilde{\x}]} \log \tilde{p}_{t}(\trunc[\tilde{\x}]) \bigl)\bigl\}\d t + g(t)\d \breve{\boldv}
\end{equation}

where $\breve{\boldv} = \boldh(\boldA)\widehat{\boldw} = \boldh(\boldA \widehat{\boldw})$ is a mirrored Brownian motion. Finally, notice that for any $\y \in \R^{d_X}$, we have $\trunc^{-1}(\boldA^2\y) = \boldA^{2}\trunc^{-1}(\y)$, which follows from the definition of $\trunc^{-1}$.
 Hence, \cref{eq:reverse_fourier_before_commute} is equivalent to:
 \begin{equation} \label{eq:reverse_fourier_after_commute}
\d \tilde{\x} =  \bigl\{ U\boldf(U^*\tilde{\x}, t) - g(t)^2 \boldA^2 \tilde{\s}(\tilde{\x}, t) \bigl\}\d t + g(t)\d \breve{\boldv}
\end{equation}
where $\tilde{\s}(\tilde{\x}, t)$ has been defined in \cref{eq:score_dft_def}.

\end{proof}

\subsection{Denoising score matching in the frequency domain} \label{subapp:theory_score}

\dftscore*
\begin{proof}
Let $\boldA \in \R^{N \times N}$ be the diagonal matrix such that $[\boldA]_{\kappa, \kappa} = \begin{cases} 1 & \text{if $\kappa = 0$, or $N$ is even and $\kappa = N/2$} \\
                \frac{1}{\sqrt{2}} & \text{otherwise}
\end{cases}$ 

\underline{Step 1:} We first express the score of $\x$ with respect to the score of the truncation $\trunc[\tilde{\x}]$.

By definition of $\boldh$ in \cref{eq:inv_coord_chart}, we have $\x = U^*\boldh(\trunc[\tilde{\x}])$. Hence, we can write, using the change of variable formula: 
\begin{equation}
    p_{t|0}(\x(t) \vert \x(0)) = C \cdot \tilde{p}_{t|0} \bigl(\trunc[\tilde{\x}(t)] \vert \trunc[\tilde{\x}(0)]\bigl)
\end{equation}
where $C$ is a constant which does not depend on $\x$, since $\x \mapsto \trunc[U\x]$ is linear. Moreover, let us write $\x \mapsto \trunc[U\x]$ in matrix form, i.e. $\forall \x \in \mathbb{R}^{d_X}$, $\trunc[U\x] = VU_{col}\x = Q \x$, where $V \in \R^{N \times 2N}$, $U_{col} = \begin{pmatrix} U_{re} \\ U_{im}\end{pmatrix}$ and $Q$ is an invertible matrix in $\R^{N \times N}$. For the rest of the proof, we shall build on the below results:

\textcolor{Peach}{Result 1.} $QQ^{T} = \boldA^{2}$. To see this, write $QQ^{T} = VU_{col}U_{col}^{T}V^{T}$. The matrix $U_{col}U_{col}^{T}$ is equal to $\begin{pmatrix}
    U_{re}^2 & 0_{N} \\
    0_{N} & U_{im}^2 
\end{pmatrix}$ (cf. the proof to \cref{lemma:brownian_transform}), while the multiplication by $V$, on the left and on the right of $U_{col}U_{col}^{T}$, extracts the submatrix corresponding to the indices represented by the truncature $\trunc$. Hence, $VU_{col}U_{col}^{T}V^{T} = \boldA^{2}$.

\textcolor{Violet}{Result 2.}  For any $\x \in \R^{d_X}$ , we have $Q^{T}\x = U^*\trunc^{-1}[\boldA^2 \x]$. To see this, notice that \textcolor{Peach}{Result 1} implies that $Q^{T}\x = Q^{-1} \boldA^{2} \x = U^*\trunc^{-1}[\boldA^{2} \x]$ for all $\x \in \R^{d_X}$.

Equipped with these results, we can now complete the rest of the proof. First, we have:
\begin{align}
    \nabla_{\x(t)} \log p_{t|0}(\x(t) \vert \x(0)) &= \nabla_{\x(t)} \log \tilde{p}_{t|0}\bigl(\trunc[\tilde{\x}(t)] \vert \trunc[\tilde{\x}(0)]\bigl) \\
    &= Q^{T} \nabla_{\trunc[\tilde{\x}(t)]} \log \tilde{p}_{t|0}(\trunc[\tilde{\x}(t)] \vert \trunc[\tilde{\x}(0)]) \label{eq:change_variable} & \text{(Chain rule)}
\end{align}

\underline{Step 2:} We then obtain:
\begin{align*}
\scorematching\left(\s'_{\tilde{\theta}}, \s_{t \vert 0}, \x, t\right) &\coloneqq \|\s'_{\tilde{\theta}}(\x, t) -  \nabla_{\x(t)} \log p_{t|0}(\x(t) \vert \x(0))\|^2 \\ &= \|U\s'_{\tilde{\theta}}(\x, t) -  U\nabla_{\x(t)} \log p_{t|0}(\x(t) \vert \x(0))\|^2 & \text{(Parseval identity)}\\
&= \|\tilde{\s}_{\tilde{\theta}}(\tilde{\x}, t)  - UQ^{T} \nabla_{\trunc[\tilde{\x}(t)]} \log \tilde{p}_{t|0}(\trunc[\tilde{\x}(t)] \vert \trunc[\tilde{\x}(0)])\|^2 & \text{(\cref{eq:change_variable})}\\
&= \|\tilde{\s}_{\tilde{\theta}}(\tilde{\x}, t)  - UU^*\trunc^{-1}[\boldA^2\nabla_{\trunc[\tilde{\x}(t)]} \log \tilde{p}_{t|0}(\trunc[\tilde{\x}(t)] \vert \trunc[\tilde{\x}(0)]]\|^2 & \text{(\textcolor{Violet}{Result 2})}\\
&= \|\tilde{\s}_{\tilde{\theta}}(\tilde{\x}, t)  - \boldA^{2}\trunc^{-1}[\nabla_{\trunc[\tilde{\x}(t)]} \log \tilde{p}_{t|0}(\trunc[\tilde{\x}(t)] \vert \trunc[\tilde{\x}(0)]]\|^2  & \text{(\cref{prop:unitary_U} \& Definition of $\trunc^{-1}$)} \\
&= \|\tilde{\s}_{\tilde{\theta}}(\tilde{\x}, t)  - \boldA^{2}\tilde{\s}_{t \vert 0}(\tilde{\x}, t)\|^2 & \text{(\cref{eq:score_dft_def})} \\
&=
\scorematching\left(\tilde{\s}_{\tilde{\theta}}, \Lambda^{2} \tilde{\s}_{t \vert 0}, \tilde{\x}, t\right).
\end{align*}
\end{proof}

\section{Empirical details} \label{app:empirical_details}
\textbf{Compute resources.} All the models were trained and used for sampling on a single machine equipped with a 18-Core Intel Core i9-10980XE CPU, a NVIDIA RTX A4000 GPU and a NVIDIA GeForce RTX 3080.

\subsection{Details on datasets} \label{subapp:datasets_details}
In this subsection, we give detailed information about the $6$ datasets used throughout our experiments and the preprocessing steps for each of them.

\textbf{ECG.} We use two collections of heartbeat signals, from the MIT-BIH Arrhythmia Dataset and the PTB Diagnostic ECG Database \cite{kachuee2018ecg}.
No preprocessing was performed on this dataset.

\textbf{MIMIC-III.} MIMIC-III \cite{johnson2016mimic} is a database consisting of deidentified records for patients who were in critical unit care units. \textit{Preprocessing.} We use the "vitals labs" table of the database, which corresponds to time-varying vitals and labs.
We extract the rows of the dataset which correspond to the first 24 hours of stay by using \emph{MIMIC-Extract}~\cite{wang2020mimic}. The features are then standardized across all times and patients. We also perform imputation to handle missing values in the dataset. To do so, we consider the mean features (average measurement over 1 hour). For each patient, and missing value, we propagate the last observation forward if this is possible. If not, we fill the missing value with the mean value for the patient (which is computed over the whole stay). If no mean value is available, we fill the entry with 0. 

\textbf{NASDAQ-2019.} This dataset \cite{onyshchak2020} contains daily prices for tickers trading on NASDAQ, and contains prices for up to 1st of April 2020. \textit{Preprocessing.} We considered one year of daily prices from 1st of January 2019 to 1st of January 2020. Each sample corresponds to one stock, and we remove the stocks which are not active in this whole time interval, or contain missing values.

\textbf{NASA battery.} The NASA battery dataset \cite{saha2007} consists of profiles for Li-on batteries, under charge and discharge. \textit{Preprocessing.} For both the charge and discharge datasets, we bin the time values (bins of size $10$ for Charge, $15$ for Discharge) and compute the mean of each feature inside each bin.

\textbf{US-Droughts.} This dataset \cite{minixhofer2021} consists of drought levels in different US counties, from 2000 to 2020. \textit{Preprocessing.} We consider one year of history, from 1st of January 2011 to 1st of January 2012, and drop the columns with missing values.

\subsection{Details on evaluation} \label{subapp:eval_details}

\textbf{Sliced Wasserstein distances.}
The sliced Wasserstein distance \cite{bonneel2015sliced} is a metric which can handle high-dimensional distributions. It is motivated by the fact that the Wasserstein distance is easy to compute when comparing two one-dimensional distributions. The idea of the sliced Wasserstein distance is to map the high-dimensional distributions of interest to one-dimensional distributions, by considering random projections on vectors of the unit sphere. For two distributions $\mu_{1}$ and $\mu_{2}$, it can be written as: 

\begin{equation}
    SW_{p}(\mu_1, \mu_2) \coloneqq \int_{\mathbb{S}^{d-1}} W_{p}(P_{u}\#\mu_1, P_{u}\#\mu_2)  du 
\end{equation}
where $\mathbb{S}^{d-1}$ is the unit sphere in dimension $d$, $P_{u}(x) = u \cdot x$ denotes the projection of $x$ on $u$, $P_{u} \# \mu$ is the push-forward of $\mu$ by $P_{u}$, and $W_{p}$ is the Wasserstein distance of order $p$. To estimate this quantity in practice, we sample $n=10,000$ random vectors $\{u_i \vert i\in [n] \}$ which follow a uniform distribution in $\mathbb{S}^{d-1}$ and consider $p = 2$. Hence, we can approximate ${SW}_{p}$ by the Monte-carlo estimator:
\begin{equation}
    \hat{SW}_{p}(\mu_1, \mu_2)  = \frac{1}{n}\sum_{i=1}^{n} W_{p}(P_{u_{i}}\#\mu_1, P_{u_{i}}\#\mu_2)
\end{equation}

\textbf{Marginal Wasserstein distances.}
In addition to the sliced Wasserstein distance, we also consider the marginal Wasserstein distance. For any $j \in \{1,...,d\}$, the $j$-th marginal Wasserstein distance is defined as:
\begin{equation}
    {MW_{p}}^{(j)}(\mu_{1},\mu_{2}) = W_p(P_{e_j}\#\mu_{1}, P_{e_j} \# \mu_{2})  
\end{equation}
where $e_j$ is the $j$-th vector of the standard basis of $\R^{d}$. Throughout our experiments in \cref{sec:empirical}, we compute the Wasserstein distances with respect to $\D{train}$ and $\tildeD{train}$.

\subsection{Additional plots} \label{subapp:more_plots}

\textbf{Sliced Wasserstein distances.} In \cref{fig:sliced_box_plots}, we show the distribution of the sliced Wasserstein distances over all slices. In addition, we have included the average sliced Wasserstein distances obtained with 2 baselines. The first baseline is simply the Wasserstein distance between the training set and a set of sample only containing identical copies the average sample $SW(\D{train}, \S{mean})$, where $\S{mean} = \{ \mathbb{E}_{X \sim U(\D{train}}) [X]\} $. It represents the performance of a dummy generator that only generates the average time series and is denoted by \emph{mean} in \cref{fig:sliced_box_plots}. The second baseline is the Wasserstein distance between two random splits of the training set $SW(\D{train}^{1/2}, \D{train}^{2/2})$, where $\D{train} = \D{train}^{1/2} \bigsqcup \D{train}^{2/2}$ is a decomposition of the training set into two disjoint random splits of equal size $|\D{train}^{1/2}| = |\D{train}^{2/2}|$. It represents the distance between two samples from the ground-truth distribution and is denoted by \emph{self} in \cref{fig:sliced_box_plots}. As we can observe, both the time and frequency diffusion models substantially outperform the \emph{mean} baseline (as expected) and perform on par with the \emph{self} baseline. This indicates that the models learned a good approximation of the real distribution. Furthermore, we notice that the frequency diffusion models tend to have smaller quantiles than the time diffusion models. This confirms that frequency diffusion models outperform the time diffusion models, as discussed in \cref{sec:empirical}.

\textbf{Marginal Wasserstein distances.} In \cref{fig:marginal_box_plots}, we show the distribution of the marginal Wasserstein distances over all slices. In addition, we have included the average marginal Wasserstein distances obtained with the 2 baselines defined in the previous paragraph.  Again, both the time and frequency diffusion models tend to outperform the \emph{mean} baseline (as expected) and perform on par with the \emph{self} baseline. Furthermore, we notice that the frequency diffusion models tend to have smaller quantiles than the time diffusion models. This is consistent with the observations made in the above paragraph.

\textbf{Per-sample localization.} In \cref{fig:joind_deloc}, we observe the distribution of our localization metrics $\deloc{time}$ and $\deloc{sigma}$ for each sample and each dataset from \cref{sec:empirical}. We notice that most samples are located below the $y=x$ axis, which confirms the fact that most samples are more localized in the frequency domain. Interestingly, we also observe that none of the samples is located close to the origin. This confirms the uncertainty theorem from \cite{nam2013}.

\begin{figure}
\centering
    \begin{subfigure}{0.45\textwidth}
        \includegraphics[width=\textwidth]{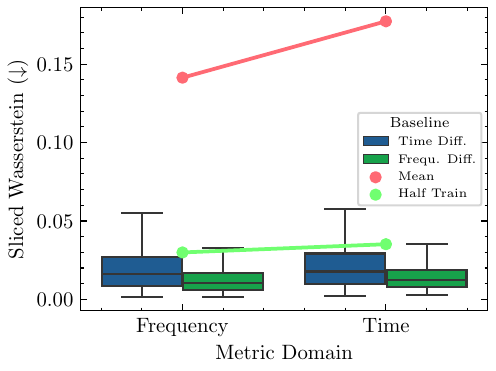}
        \caption{ECG.}
    \end{subfigure}
    \hfill
    \begin{subfigure}{0.45\textwidth}
        \includegraphics[width=\textwidth]{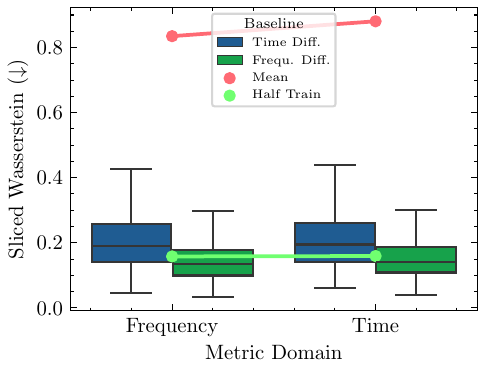}
        \caption{MIMIC-III.}
    \end{subfigure}
    \begin{subfigure}{0.45\textwidth}
        \includegraphics[width=\textwidth]{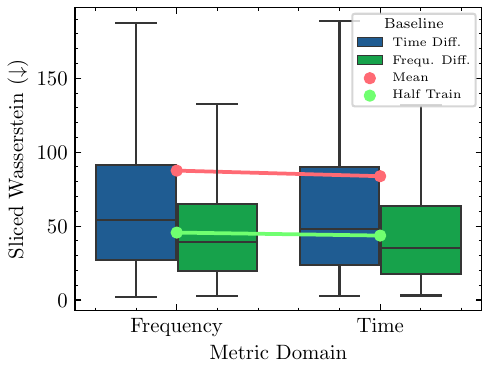}
        \caption{NASDAQ-2019.}
    \end{subfigure}
    \hfill
    \begin{subfigure}{0.45\textwidth}
        \includegraphics[width=\textwidth]{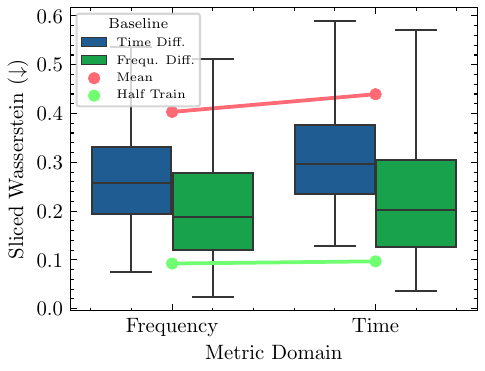}
        \caption{NASA-Charge.}
    \end{subfigure}
    \begin{subfigure}{0.45\textwidth}
        \includegraphics[width=\textwidth]{figures/sliced_wasserstein_nasdaq-2019.pdf}
        \caption{NASA-Discharge.}
    \end{subfigure}
    \hfill
    \begin{subfigure}{0.45\textwidth}
        \includegraphics[width=\textwidth]{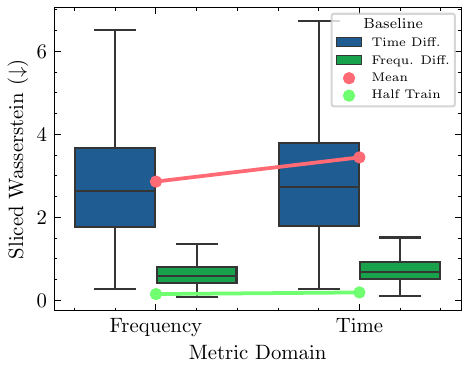}
        \caption{US-Droughts.}
    \end{subfigure}
    
    \caption{Sliced Wasserstein distances of time and frequency diffusion models.}
    \label{fig:sliced_box_plots} 
\end{figure}

\begin{figure}
\centering
    \begin{subfigure}{0.45\textwidth}
        \includegraphics[width=\textwidth]{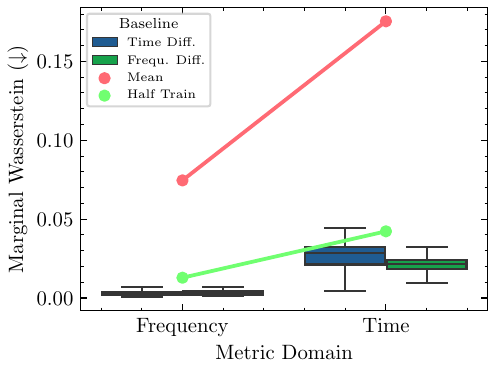}
        \caption{ECG.}
    \end{subfigure}
    \hfill
    \begin{subfigure}{0.45\textwidth}
        \includegraphics[width=\textwidth]{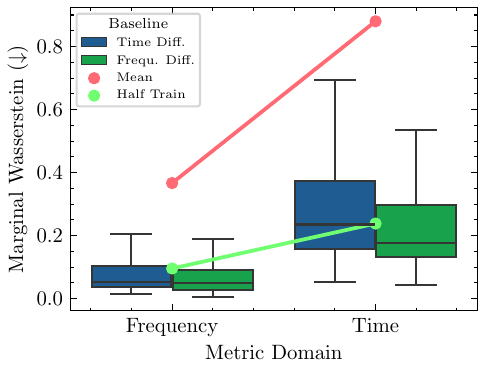}
        \caption{MIMIC-III.}
    \end{subfigure}
    \begin{subfigure}{0.45\textwidth}
        \includegraphics[width=\textwidth]{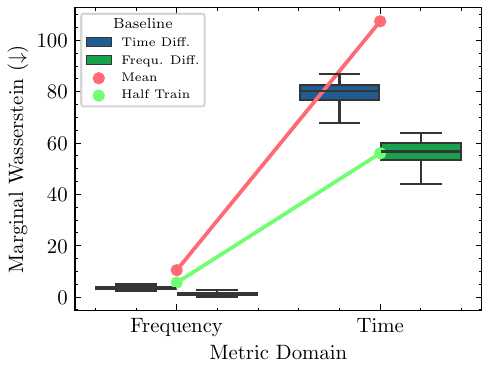}
        \caption{NASDAQ-2019.}
    \end{subfigure}
    \hfill
    \begin{subfigure}{0.45\textwidth}
        \includegraphics[width=\textwidth]{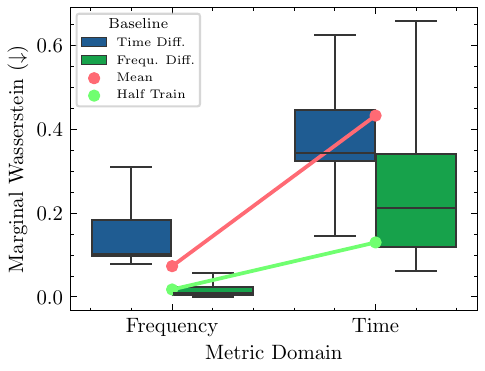}
        \caption{NASA-Charge.}
    \end{subfigure}
    \begin{subfigure}{0.45\textwidth}
        \includegraphics[width=\textwidth]{figures/marginal_wasserstein_nasdaq-2019.pdf}
        \caption{NASA-Discharge.}
    \end{subfigure}
    \hfill
    \begin{subfigure}{0.45\textwidth}
        \includegraphics[width=\textwidth]{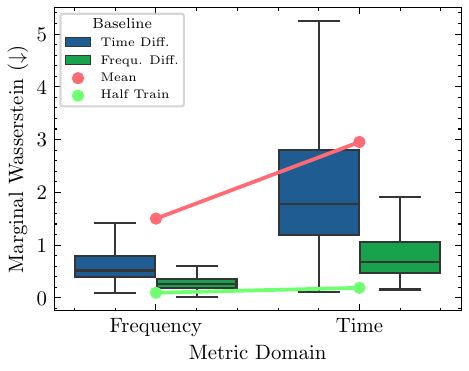}
        \caption{US-Droughts.}
    \end{subfigure}
    
    \caption{Marginal Wasserstein distances of time and frequency diffusion models.}
    \label{fig:marginal_box_plots} 
\end{figure}

\begin{figure}
\centering
    \includegraphics[width=.5\textwidth]{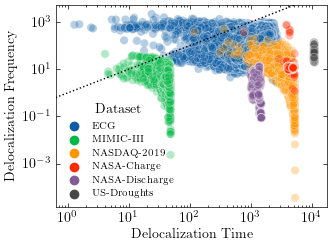}
    \caption{Localization metrics $\deloc{time}$ and $\deloc{freq}$ for all the samples of all datasets. We observe that no sample has a high localization (i.e. low $\Delta$) in the time and frequency domain simultaneously.}
    \label{fig:joind_deloc}
\end{figure}

\section{Alternative backbone} \label{app:other_backbone}
\begin{figure}
\centering
    \begin{subfigure}{0.45\textwidth}
        \includegraphics[width=\textwidth]{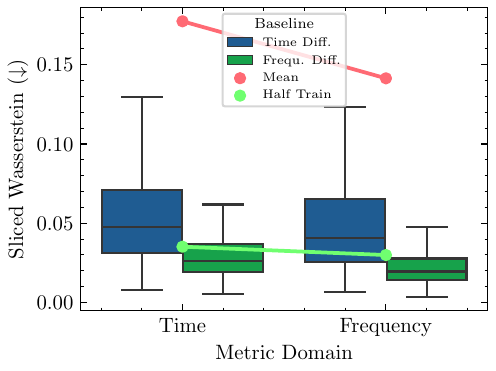}
        \caption{ECG.}
    \end{subfigure}
    \hfill
    \begin{subfigure}{0.45\textwidth}
        \includegraphics[width=\textwidth]{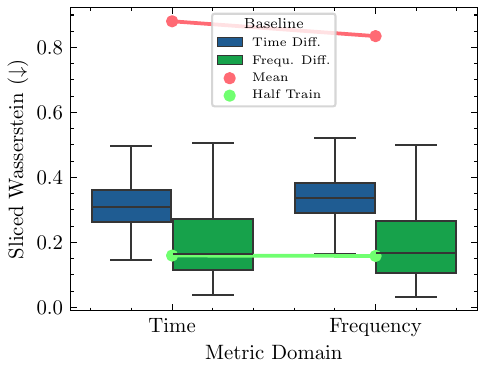}
        \caption{MIMIC-III.}
    \end{subfigure}
    \begin{subfigure}{0.45\textwidth}
        \includegraphics[width=\textwidth]{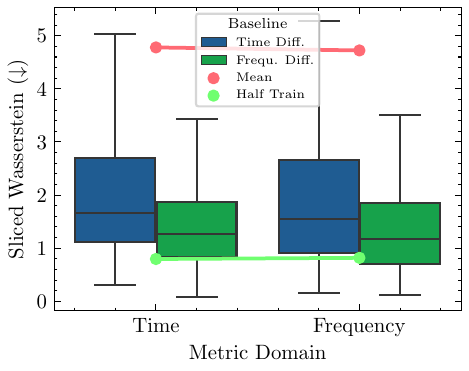}
        \caption{NASA-Discharge.}
    \end{subfigure}
    \hfill
    \begin{subfigure}{0.45\textwidth}
        \includegraphics[width=\textwidth]{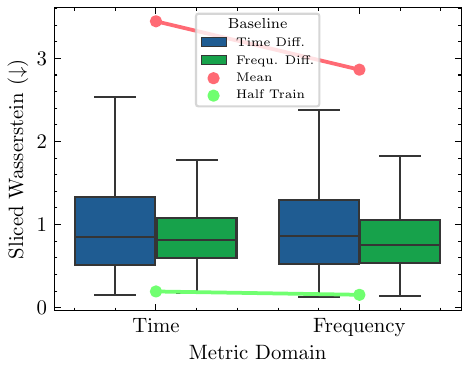}
        \caption{US-Droughts.}
    \end{subfigure}
    
    \caption{Sliced Wasserstein distances of time and frequency LSTM models.}
    \label{fig:sliced_box_plots_lstm} 
\end{figure}

\textbf{LSTM models.} For each dataset, we try an alternative parametrization of  the time score model $\s_{\theta}$ and the frequency score model $\tilde{\s}_{\tilde{\theta}}$ as LSTM encoders with 10 layers, each with dimension $d_{\mathrm{model}} = 72$. Both models have diffusion time $t$ encoding through random Fourier features composed with a learnable dense layer. This results in models with $427$k parameters. The data is noised by using a VP-SDE, as in \cite{song2020score}. The score models are trained with the denoising score-matching loss, as defined in \cref{sec:theory}. All the models are trained for 200 epochs with batch size $64$, AdamW optimizer and cosine learning rate scheduling ($20$ warmup epochs, $\mathrm{lr_{max}} = 10^{-3}$). The selected model achieves the lowest validation loss.

\textbf{Sliced Wasserstein distances.} In \cref{fig:sliced_box_plots_lstm}, we show the distribution of the sliced Wasserstein distances over all slices for the LSTM models. In addition, we have included the average sliced Wasserstein distances obtained with 2 baselines defined in \cref{subapp:more_plots}.  As observed for the transformer models, both the time and frequency diffusion models substantially outperform the \emph{mean} baseline (as expected) and perform on par with the \emph{self} baseline. This indicates that the models learned a good approximation of the real distribution. Furthermore, we notice that the frequency diffusion models tend to have smaller quantiles than the time diffusion models. This confirms that frequency diffusion models outperform the time diffusion models, as observed for the transformer models in \cref{sec:empirical}. We note that the Nasa-Charge and the NASDAQ-2019 are absent from \cref{fig:sliced_box_plots_lstm}. This is because we did not manage to obtain diffusion models performing better than the \emph{mean} baseline for these datasets, hence leading to non informative comparisons between models with poor performances.

\textbf{Other attempts.} In order to minimize the inductive bias in our models, we also tried to train diffusion models with simple feed-forward neural networks. Unfortunately, this attempt was unsuccessful and resulted in models performing worse than the \emph{mean} baseline in each case. This emphasizes the value of incorporating inductive biases in time series diffusion models.   

\section{Sample visualization} \label{app:sample_viz}
In \cref{fig:ECG_samples,fig:NASA_charge_samples,fig:NASA_discharge_samples,fig:NASDAQ_samples,fig:droughts_samples}, we visualize a few examples generated by each diffusion model, along with ground-truth training examples. We do not include samples from the MIMIC-III dataset in accordance with the dataset licence. We observe that the frequency diffusion models generate samples that are substantially less noisy than than the ones generated by time diffusion models. All the generated samples resemble training samples, with the only exception of the NASDAQ-2019 dataset. The models appear to be struggling with the high correlation between the different features.

\begin{figure}
    \centering
    \includegraphics[scale=0.3]{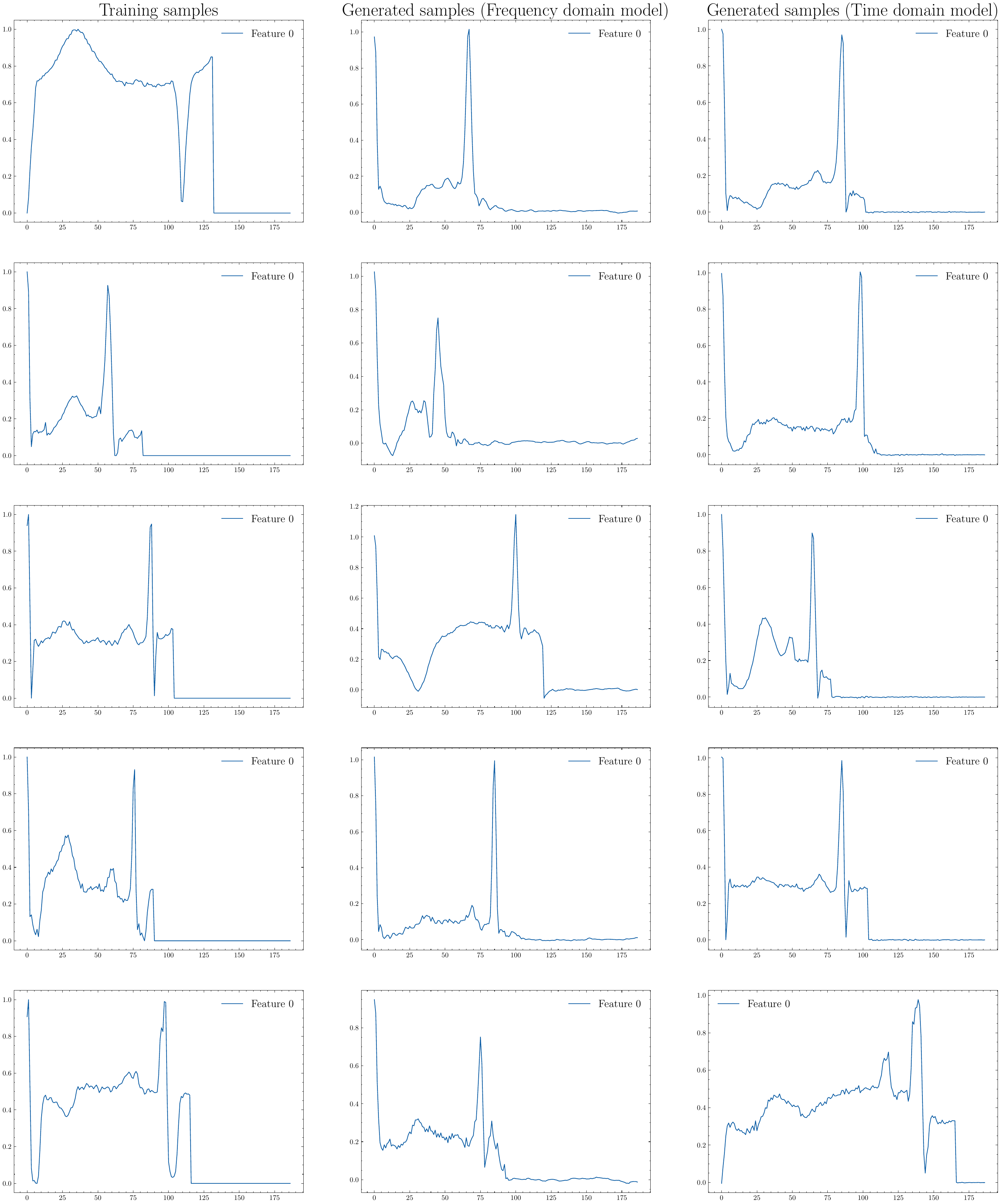}
    \caption{Samples for the ECG dataset. The $y$ axis corresponds to the different features, while the $x$ axis corresponds to the different time steps.}
    \label{fig:ECG_samples}
\end{figure}

\begin{figure}
    \centering
    \includegraphics[scale=0.3]{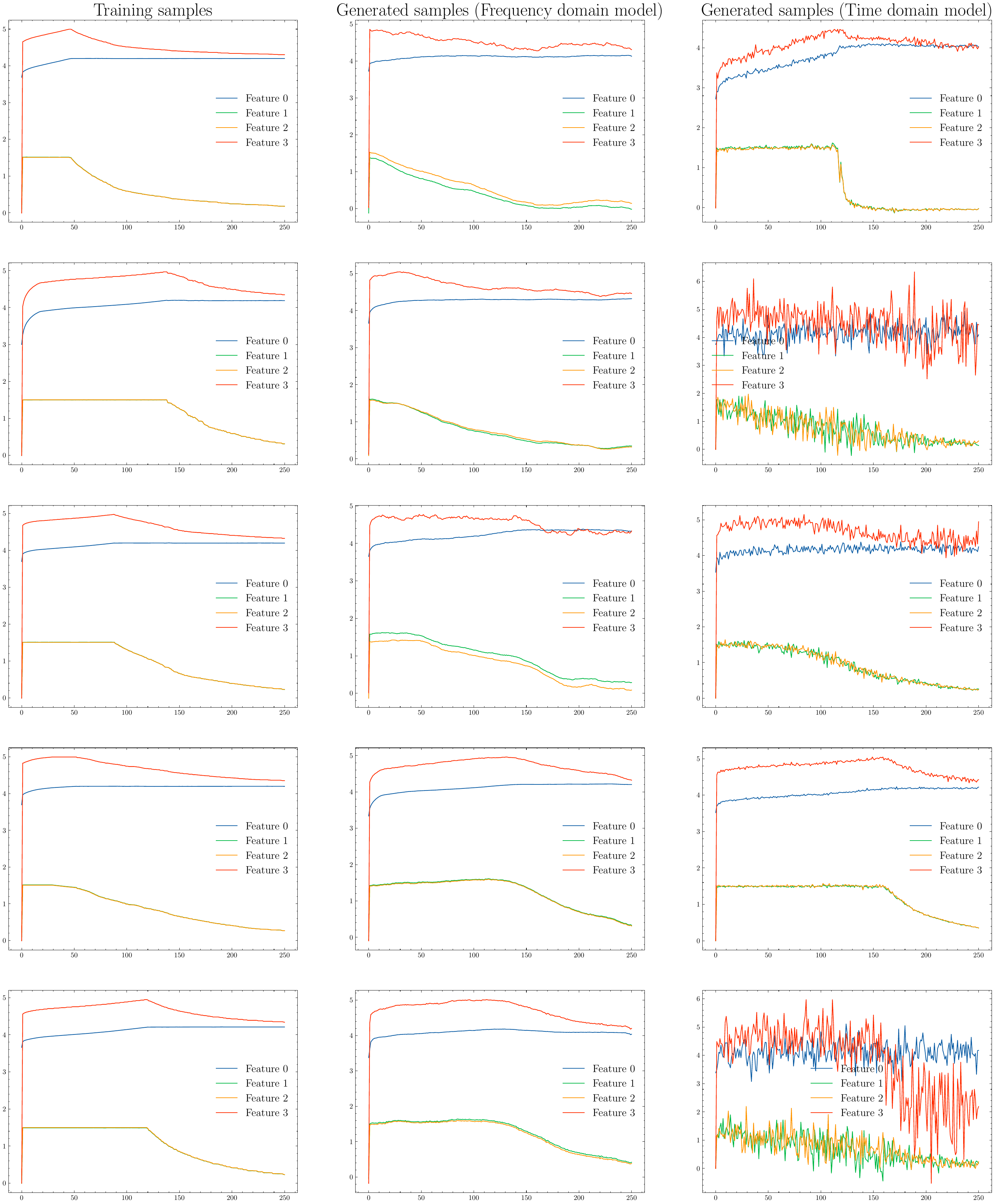}
    \caption{Samples for the NASA-Charge dataset. The $y$ axis corresponds to the different features, while the $x$ axis corresponds to the different time steps.}
    \label{fig:NASA_charge_samples}
\end{figure}

\begin{figure}
    \centering
    \includegraphics[scale=0.3]{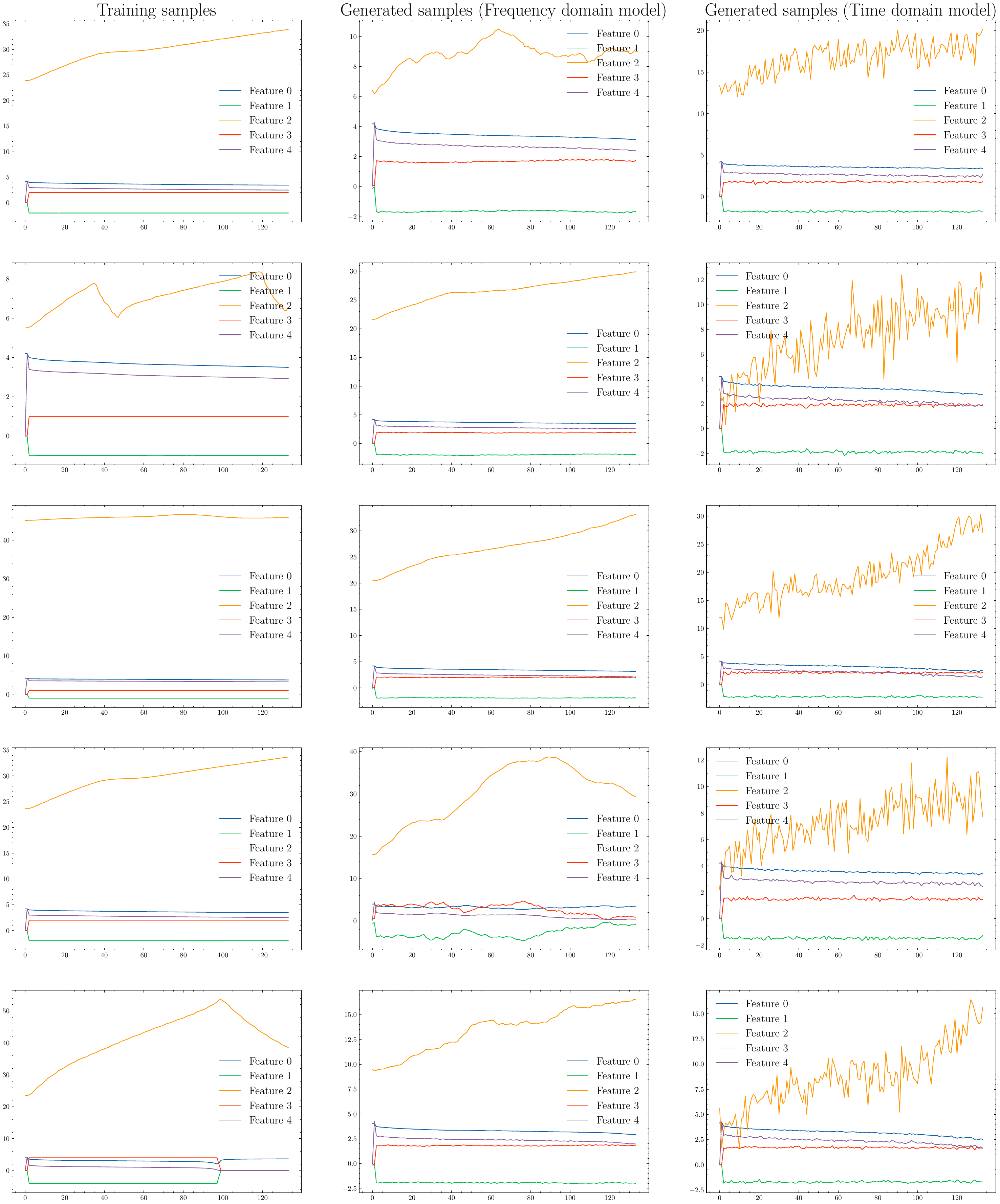}
    \caption{Samples for the NASA-Discharge dataset. The $y$ axis corresponds to the different features, while the $x$ axis corresponds to the different time steps.}
    \label{fig:NASA_discharge_samples}
\end{figure}

\begin{figure}
    \centering
    \includegraphics[scale=0.3]{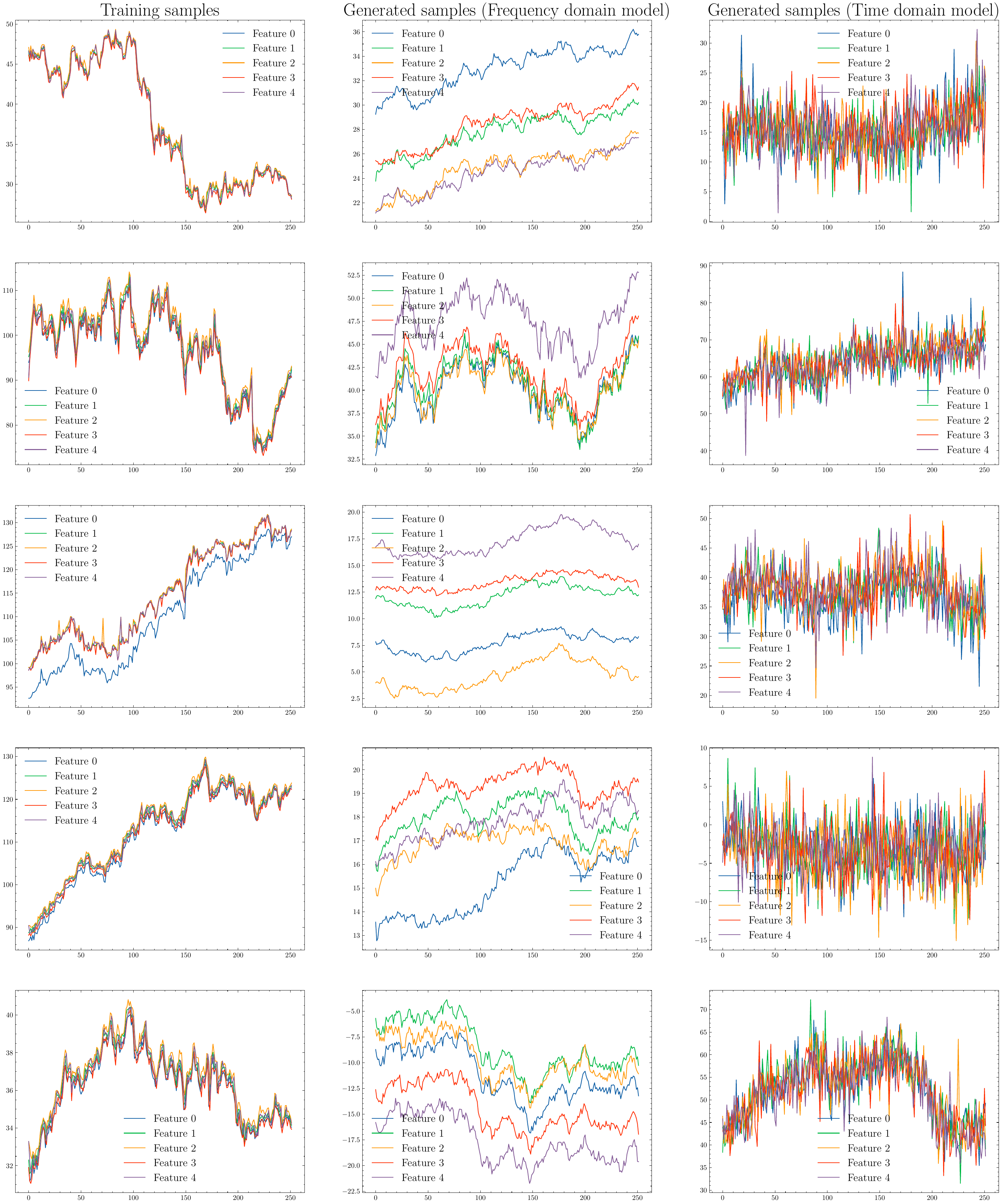}
    \caption{Samples for the NASDAQ-2019 dataset. The $y$ axis corresponds to the different features, while the $x$ axis corresponds to the different time steps.}
    \label{fig:NASDAQ_samples}
\end{figure}

\begin{figure}
    \centering
    \includegraphics[scale=0.3]{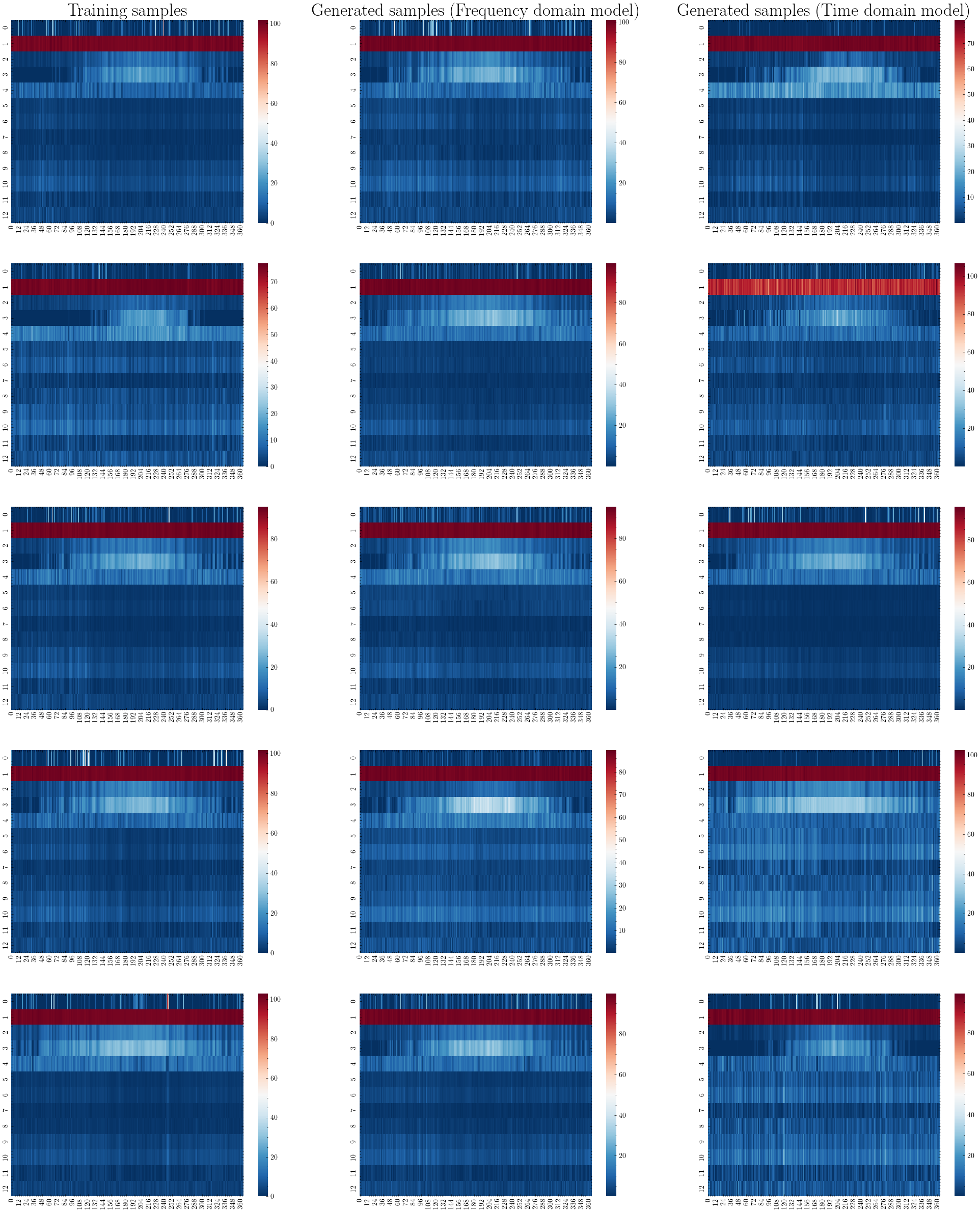}
    \caption{Samples for the US-Droughts dataset, represented as heatmaps. The $y$ axis corresponds to the different features, while the $x$ axis corresponds to the different time steps.}
    \label{fig:droughts_samples}
\end{figure}

\end{document}